\documentclass{article}
\usepackage{iclr2027_conference,times}
\usepackage[T1]{fontenc}

\usepackage{amsmath,amsfonts,bm}

\def\eqref#1{equation~\ref{#1}}

\def\1{\bm{1}}

\DeclareMathAlphabet{\mathsfit}{\encodingdefault}{\sfdefault}{m}{sl}
\SetMathAlphabet{\mathsfit}{bold}{\encodingdefault}{\sfdefault}{bx}{n}

\usepackage{amsmath,amssymb}
\usepackage{amsthm}
\usepackage{booktabs}
\usepackage{algorithm}
\usepackage{algpseudocode}
\usepackage{graphicx}
\usepackage{placeins}
\usepackage{subcaption}
\usepackage{wrapfig}
\usepackage{needspace}
\usepackage{multirow}
\usepackage{array}
\usepackage{xcolor}
\usepackage{tikz}
\usetikzlibrary{arrows.meta,positioning,fit,calc}
\usepackage{hyperref}
\usepackage{xurl}
\usepackage{url}
\hypersetup{hypertexnames=false}


\newtheorem{proposition}{Proposition}

\newcommand{\model}{\mbox{\textsc{RetroGEF}}}
\newcommand{\tableformat}{%
  \scriptsize
  \setlength{\tabcolsep}{2pt}%
  \renewcommand{\arraystretch}{1.04}%
  \setlength{\aboverulesep}{0.35ex}%
  \setlength{\belowrulesep}{0.35ex}%
}

\title{\model{}: Dynamic Graph Edit Flow for Single-Step Retrosynthesis}

\author{%
Xiaozhuang Song\textsuperscript{1,2} \quad
Xuemin Chen\textsuperscript{1,2} \quad
Xinjian Zhao\textsuperscript{1,2} \\
\textbf{Yaoyao Xu\textsuperscript{1,2}} \quad
\textbf{Tianshu Yu\textsuperscript{1,2}}\thanks{Corresponding author.} \\
\textnormal{\textsuperscript{1}The Chinese University of Hong Kong, Shenzhen} \\
\textnormal{\textsuperscript{2}Shanghai AI Laboratory} \\
\textnormal{\texttt{\{xiaozhuangsong1,xueminchen,xinjianzhao1\}@link.cuhk.edu.cn}} \\
\textnormal{\texttt{yaoyaoxu@link.cuhk.edu.cn \quad yutianshu@cuhk.edu.cn}}
}

\iclrfinalcopy

\begin{document}

\maketitle
\lhead{Preprint}

\begin{abstract}
Retrosynthesis enables the discovery of viable synthetic routes to target
molecules. It plays a central role in modern drug discovery and materials
design. Retrosynthesis involves molecular graph transformations that can
change both connectivity and graph size. These transformations may introduce
reactant components absent from the target while revising the product-derived
structure. To model these transformations, we propose \model{}, a flow-based
generative model for single-step retrosynthesis. Starting from the target
molecule, it constructs possible reactants by adding atoms and changing bonds
in the molecular graph. \model{} models molecular transformations and changes
in graph size within the same generative process, rather than relying on a
fixed-size graph canvas. It learns this process directly from product--reactant
pairs without requiring a prescribed edit order. Experiments on representative
retrosynthesis benchmarks demonstrate that \model{} achieves state-of-the-art
performance.

\end{abstract}

\section{Introduction}
\label{sec:introduction}

Single-step retrosynthesis predicts the precursor molecules needed to obtain a
desired product, providing the individual transformations used in synthesis
planning \citep{segler2018planning,chen2020retrostar,song2026aot}.
Recovering these precursors requires more than changing bonds in the product:
the reactant set may contain several molecules and additional atoms whose
number and connectivity are not known in advance
\citep{yao2023nag2g,zhong2023graph2edits}. A model must both construct missing
structure and revise the structure it inherits. The central challenge is to
learn this variable-size construction process from reaction records that
specify the product and reactants, but not how one graph is built from the other.

Existing representations organize this problem differently. Sequence models
serialize reactants, graph-edit models expose structural transformations
\citep{zhong2022rsmiles,sacha2021megan,zhong2023graph2edits,deng2025rsgpt},
and diffusion or flow models learn stochastic paths between reaction graphs
\citep{igashov2024retrobridge,yadav2025retrosynflow,wang2026retrodit}.
A fixed graph canvas reserves capacity before generation, whereas a growing
graph must decide how each new atom enters and connects to the existing
structure. Atom creation and connectivity are coupled: a new atom becomes an
attachment site, and its first bond changes the context for subsequent edits.
This motivates a representation in which growth and structural revision act
on the same evolving graph.

Learning these edits introduces a second ambiguity. Multiple construction
paths can reach the same reactant set, yet reaction records provide no preferred
order \citep{sacha2021megan,zhong2023graph2edits}. Nor does a partial graph
always identify which missing reactant atom a newly added atom represents:
different correspondences can yield an identical structural update. Supervision
should accommodate these alternatives rather than treat an arbitrary edit
sequence or atom correspondence as the unique answer. The learning objective
must therefore connect paired reaction graphs through the edits that the model
will actually use during generation.

We propose \model{}, a continuous-time graph jump process built around
\emph{complete graph edits}. Each edit specifies a complete structural change:
adding an attached atom determines its attachment site, attributes, and first
bond together. The resulting graph immediately becomes the context for the next
edit. Additional atoms can extend the new structure, while bond and attribute
edits can revise both inherited and generated parts. Representing these changes
relative to the product lets the state grow through construction rather than
occupy a preallocated reactant canvas.

To learn this process, we construct a stochastic bridge from each product to
its recorded reactant graph. At an intermediate graph, the bridge assigns rates
to eligible edits, providing supervision for both which change to make and
when to make it. Training samples different edit orders and aggregates rates
when hidden atom correspondences yield the same observable update. Generator
matching learns these rates from the sampled bridge states
\citep{holderrieth2024generatormatching}, aligning paired-graph supervision
with the complete edits used at inference.

Our contributions are threefold:
\begin{itemize}
  \item A variable-size graph jump process that couples atom creation and
  initial connectivity through complete edits, allowing growth and revision
  within a single evolving state.
  \item A paired-graph bridge that supervises edit rates across multiple
  construction orders and combines equivalent atom correspondences into
  observable transitions.
  \item An empirical evaluation on USPTO-Full and USPTO-50K demonstrating
  improved top-$k$ accuracy under predicted and reference reaction-center
  conditioning, with controlled ablations and analyses of graph growth,
  structural revision, and reaction classes.
\end{itemize}

\section{Related Work}
\label{sec:related}

\paragraph{Representations for retrosynthesis.}
Retrosynthesis methods differ in the structural decisions they expose.
Template-based models select transformations by similarity or learned representations
\citep{coley2017retrosim,dai2019gln,chen2021localretro}.
Sequence models generate SMILES \citep{weininger1988smiles,liu2017seq2seq}
with augmentation, alignment, or pretraining
\citep{tetko2020augmented,zhong2022rsmiles,deng2025rsgpt}.
Graph-to-sequence models retain structured inputs
\citep{seo2021gta,tu2022graph2smiles}.
Two-stage models identify centers or edits before completing synthons
\citep{shi2020g2gs,yan2020retroxpert,somnath2021graphretro,chen2023g2retro}.
Graph-to-graph translation \citep{lin2023g2gt,yao2023nag2g} and autoregressive
editing \citep{sacha2021megan,zhong2023graph2edits} expose molecular structure
and intermediate states. We instead learn rates of complete structural edits
across multiple product-to-reactant paths.

\paragraph{Graph generation and variable-size structure.}
Autoregressive generators make successive node and edge decisions
\citep{you2018graphrnn,shi2020graphaf,luo2021graphdf}, whereas one-shot models
generate graphs jointly \citep{martinkus2022spectre}. Score-based and
discrete-state models evolve their representations
\citep{niu2020scoregraph,jo2022gdss,vignac2023digress,qin2025defog}.
Variable-size approaches include
trans-dimensional jump diffusion \citep{campbell2023transdimensional} and
sequence insertion/deletion flows \citep{havasi2025editflows}.
Retrosynthetic growth must additionally specify connectivity relative to the
product \citep{yao2023nag2g,laabid2025equivariant}. Our attached atom edit
addresses this requirement by coupling atom creation to its first bond. This
keeps the evolving structure anchored to the product while allowing its size
and component structure to be determined during generation.

\paragraph{Discrete dynamics and reaction-conditioned bridges.}
Discrete generation learns categorical transitions \citep{austin2021d3pm},
continuous-time jump rates \citep{campbell2022ctdd}, or reverse probability
ratios \citep{lou2024sedd}. Flow matching learns from conditional paths in
continuous \citep{lipman2023flowmatching} and discrete spaces
\citep{campbell2024discreteflows,gat2024discreteflow}; Generator Matching
extends this principle to Markov generators \citep{holderrieth2024generatormatching}.
For retrosynthesis, RetroBridge uses Markov bridges and RetroDiff uses staged
distribution interpolation
\citep{igashov2024retrobridge,wang2025retrodiff}.
Retro SynFlow and RetroDiT apply discrete flow matching, the latter with
reaction-center guidance
\citep{yadav2025retrosynflow,wang2026retrodit}.
Our bridge supervises complete edits, combining equivalent hidden atom
correspondences while learning transition intensity and edit probabilities.
The resulting construction order remains latent, while each observable edit
is defined on the current graph and can be revised by later transitions.
This perspective connects variable-size graph generation with the structural
revisions required by retrosynthesis, which we develop in the next section.

\section{\model{}}
\label{sec:method}

\model{} (Retrosynthetic Graph Edit Flow) learns a continuous-time process
of complete graph edits from paired product and reactant graphs.
Figure~\ref{fig:graph-generator} summarizes training and generation.

\begin{figure}[tb!]
\centering
\resizebox{\textwidth}{!}{
\begingroup%
\renewcommand{\sfdefault}{DejaVuSans-TLF}%
\definecolor{gefBlue}{HTML}{0077BB}%
\definecolor{gefOrange}{HTML}{EE7733}%
\definecolor{gefInk}{HTML}{27333C}%
\begin{tikzpicture}[
  x=1cm,y=1cm,
  font=\sffamily\fontsize{7}{8.4}\selectfont,
  text=gefInk,line cap=round,line join=round,
  gef label/.style={inner sep=1.3pt,align=center},
  gef anchor/.style={circle,draw=gefBlue,line width=0.8pt,
    fill=gefBlue!12,minimum size=3.6mm,inner sep=0pt},
  gef new/.style={circle,draw=gefOrange,line width=0.75pt,
    double=white,double distance=0.55pt,fill=gefOrange!13,
    minimum size=3.6mm,inner sep=0pt},
  gef bond/.style={draw=gefInk,line width=1.05pt},
  gef firstbond/.style={draw=gefOrange,line width=1.3pt},
  gef wire/.style={draw=gefInk!78,line width=0.75pt},
  gef flow/.style={gef wire,-{Latex[length=1.7mm,width=1.2mm]}},
  gef teacher/.style={draw=gefOrange!90!black,line width=0.9pt,
    dashed,-{Latex[length=1.7mm,width=1.2mm]}},
  gef box/.style={draw=black!38,line width=0.6pt,
    rounded corners=2pt,fill=white,align=center,inner sep=2pt},
  gef head/.style={gef box,draw=gefBlue!65!black,fill=gefBlue!4,
    minimum width=1.12cm,minimum height=0.44cm},
  pics/gefstate/.style args={#1}{code={
    \ifnum#1<2
      \draw[gef bond] (-0.15,0)--(0.43,0);
    \fi
    \ifnum#1>0
      \draw[gef firstbond] (-0.72,0)--(-0.15,0);
    \fi
    \node[gef anchor] at (-0.15,0) {\(j\)};
    \node[gef anchor] at (0.43,0) {\(i\)};
    \ifnum#1>0
      \node[gef new] at (-0.72,0) {\(u\)};
    \fi
  }}
]
\path[use as bounding box] (0,0) rectangle (14,4.7);

\draw[draw=gefOrange!35,fill=gefOrange!3,line width=0.45pt,
  rounded corners=2pt] (0.02,3.19) rectangle (13.98,4.68);
\node[gef label,anchor=west,font=\sffamily\bfseries\fontsize{7}{8.4}\selectfont]
  at (0.17,4.44) {Bridge supervision: paired endpoints};
\node[gef label,anchor=east,text=black!65] at (13.79,4.44)
  {training only; schematic states};
\pic at (0.53,3.77) {gefstate=0};
\pic at (2.10,3.77) {gefstate=2};
\node[gef label] at (0.68,3.35) {\(P\)};
\node[gef label] at (2.00,3.35) {\(R\)};

\node[gef box,draw=gefOrange!80!black,fill=gefOrange!7,
  minimum width=1.40cm,minimum height=0.86cm]
  (gefbridge) at (3.73,3.77) {bridge\\at \(t\)};
\draw[gef teacher] (2.83,3.77)--(gefbridge.west);
\node[gef box,minimum width=2.55cm,minimum height=0.86cm]
  (gefloss) at (12.05,3.77)
  {rate matching\\\(\mathcal L_{\mathrm{rate}}\)};
\draw[gef teacher] (gefbridge.east)--(gefloss.west);
\node[gef label,text=gefOrange!85!black] at (7.53,4.08)
  {target rates \(q_R(a\mid X_t)\)};
\node[gef label,text=black!65] at (7.53,3.45)
  {\(R\) is used only for bridge supervision};

\draw[gef teacher,rounded corners=2pt]
  (gefbridge.south)--(3.73,2.99)--(0.90,2.99)--(0.90,2.58);
\node[gef label,fill=white,text=gefOrange!85!black] at (2.20,2.99)
  {sample \(X_t\)};
\node[gef label] at (0.90,2.35) {current \(X_t\)};
\pic at (0.90,1.70) {gefstate=0};
\node[gef label] at (0.90,1.12) {start: \(X_0=P\)};

\node[gef box,draw=gefBlue!75!black,fill=gefBlue!6,
  text width=1.75cm,minimum height=0.90cm]
  (gefnetwork) at (3.30,1.70) {shared rate\\network};
\draw[gef flow] (1.62,1.70)--(gefnetwork.west);
\node[gef label] at (3.30,2.59) {fixed \(P\), time \(t\)};
\draw[gef flow] (3.30,2.35)--(gefnetwork.north);

\node[gef head] (geflambda) at (5.17,1.98) {\(\lambda_\theta\)};
\node[gef head] (gefpi) at (5.17,1.42) {\(\pi_\theta(a)\)};
\draw[gef wire] (gefnetwork.east)--(4.36,1.70);
\draw[gef flow] (4.36,1.70)|-(geflambda.west);
\draw[gef flow] (4.36,1.70)|-(gefpi.west);
\draw[gef wire] (geflambda.east)-|(5.99,1.70);
\draw[gef wire] (gefpi.east)-|(5.99,1.70);
\fill[gefInk!78] (5.99,1.70) circle (0.025);

\draw[gef flow,rounded corners=2pt]
  (5.99,1.70)--(5.99,2.99)--(12.05,2.99)--(gefloss.south);
\node[gef label,fill=white] at (8.95,2.99)
  {predicted rates \(r_\theta=\lambda_\theta\pi_\theta\)};

\draw[gef flow] (5.99,1.70)--(8.25,1.70);
\node[gef label,font=\sffamily\bfseries\fontsize{7}{8.4}\selectfont]
  at (7.10,2.39) {atom addition};
\node[gef label] at (7.10,2.03) {\((u,j,\mathbf m,\mathbf e)\)};
\node[gef label] at (7.10,0.99) {sample waiting\\time and edit};
\pic at (9.14,1.70) {gefstate=1};
\node[gef label] at (9.18,2.43) {\(T_aX_t\)};
\node[gef label,text=gefOrange!85!black] at (8.42,2.12) {\(\mathbf m\)};
\node[gef label,text=gefOrange!85!black] at (8.70,1.23) {\(\mathbf e\)};

\draw[gef flow] (9.88,1.70)--(10.82,1.70);
\node[gef label,fill=white] at (10.32,1.70) {\(\cdots\)};
\node[gef label] at (12.32,2.60) {terminal candidates};
\draw[gef box] (10.82,0.98) rectangle (13.82,2.26);
\node[gef label] at (11.81,2.05) {reactant set};
\node[gef label] at (13.24,2.05) {count};
\draw[black!23,line width=0.4pt] (10.94,1.85)--(13.70,1.85);
\node[gef label] at (11.81,1.64) {\(\widehat R_{(1)}\)};
\node[gef label] at (13.24,1.64) {\(n_1\)};
\node[gef label] at (11.81,1.21) {\(\widehat R_{(2)}\)};
\node[gef label] at (13.24,1.21) {\(n_2\)};
\node[gef label,text width=3.10cm] at (12.32,0.58)
  {decode, merge, rank by frequency};

\draw[gef flow,rounded corners=3pt]
  (8.99,1.45)--(8.99,0.42)--(2.08,0.42)--(2.08,1.70);
\fill[gefInk!78] (2.08,1.70) circle (0.025);
\node[gef label,fill=white] at (5.39,0.42)
  {refresh rates on the updated graph};
\end{tikzpicture}%
\endgroup%
}
\caption{\model{}: product-relative complete edits, bridge supervision, and generation.
Paired endpoints supervise edit rates (dashed arrows); the shared network
observes \((P,X_t,t)\). Each generated edit updates the graph before the
next rate prediction. States are schematic.}
\label{fig:graph-generator}
\end{figure}

\subsection{Dynamic graph states and complete edits}

Let \(P=(V_P,E_P)\) be the product graph and \(R=(V_R,E_R)\) the
disjoint union of reactant graphs, with atom and bond attributes including
stereochemistry. We learn \(p_\theta(R\mid P)\), suppressing any supplied
reaction-center condition throughout the notation. We describe \(R\) and
each evolving graph relative to the fixed product: added atoms, revised
attributes on inherited atoms, and added, removed, or relabelled bonds.
This description is lossless; the network receives the evolving graph's
changes relative to \(P\).
Atom mapping fixes the correspondence of inherited atoms to \(P\)
\citep{schwaller2021rxnmapper}. The state starts at \(X_0=P\), retaining
the product atoms and any generated atoms still present.

A complete action \(a\) specifies the location and attributes needed to
produce one successor graph \(T_aX\). An \emph{attached atom addition}
creates a new atom and its first bond; an \emph{isolated atom addition}
starts a new component. Other edits add or remove bonds, update attributes,
or remove generated atoms. Each update is completed before the next rate
prediction, so newly created structure can be extended or revised.

The admissible action set \(\mathcal A(X)\) preserves inherited product atoms and their
atomic numbers, excludes self-loops and duplicate bonds, and restricts
attribute choices to the supported vocabulary. Its masks are recomputed on
the current graph. Intermediate states may be incomplete chemical
structures; molecular validity is checked only after generation. The edit
vocabulary and structural masks are specified in
Appendices~\ref{app:events} and \ref{app:alg-factorization}.

\subsection{Bridges from products to reactants}

We construct a stochastic bridge from \(P\) to each recorded reactant graph
\(R\). The bridge creates missing atoms and corrects atoms or bonds that
differ from \(R\), producing intermediate graphs and local supervision
without a prescribed edit sequence. This use of endpoint-conditioned bridges
connects to bridge-based retrosynthesis and general Markov-generator matching
\citep{igashov2024retrobridge,holderrieth2024generatormatching}.

The sum of edit rates controls when the graph changes, while their normalized
values determine which edit occurs. We express both bridge and learned rates
per unit of transformed time \(\tau=-\log(1-t)\), so that
\(t\in[0,1)\) corresponds to \(\tau\in[0,\infty)\). The network receives
\(t\) as its bounded time input. Let
\(\mathcal C_R(X)\) denote the corrections other than atom addition,
\(m_R(X)\) the number of missing reactant-only atoms, and
\(\rho_R(a\mid X)\) a normalized distribution over eligible atom-addition
edits. The bridge assigns rate
\begin{equation}
 q_R(a\mid X)=
 \mathbf{1}[a\in\mathcal C_R(X)]+m_R(X)\rho_R(a\mid X),
 \label{eq:bridge-target}
\end{equation}
where the atom-addition term is zero when no atom is missing. Each other
correction has unit rate, while the total atom-addition rate equals the number
of atoms still to be created. This separates the amount of graph growth
remaining from the number of possible attachment choices. The total bridge intensity is
\(\Lambda_R(X)=\sum_a q_R(a\mid X)\). Every bridge edit reduces the discrepancy with
the target reactant graph. The bridge reaches a representative of the target
reactant graph after finitely many edits and remains there as \(t\to1\).

The bridge admits different edit orders, such as exchanging bond deletion
and atom addition (Figure~\ref{fig:edit-paths}A). A separate ambiguity is
which missing target atom a new atom realizes. For example, two target
atoms with the same attributes can propose the same attachment and first
bond. We sum the rates of identical complete updates, defining the
\emph{quotient bridge}. These hidden correspondences guide training paths,
while the network predicts observable edits. Appendix~\ref{app:events}
specifies the atom-addition rules.

\begin{figure}[tb!]
\centering
\begin{minipage}[t]{.48\linewidth}
\vspace{0pt}
\centering
\resizebox{\linewidth}{!}{
\begingroup%
\renewcommand{\sfdefault}{DejaVuSans-TLF}%
\definecolor{gepBlue}{HTML}{0077BB}%
\definecolor{gepOrange}{HTML}{EE7733}%
\definecolor{gepInk}{HTML}{27333C}%
\begin{tikzpicture}[
  x=1cm,y=1cm,
  font=\sffamily\fontsize{7}{8.4}\selectfont,
  text=gepInk,line cap=round,line join=round,
  gep anchor/.style={circle,draw=gepBlue,line width=0.8pt,
    fill=gepBlue!12,minimum size=3.6mm,inner sep=0pt},
  gep new/.style={circle,draw=gepOrange,line width=0.75pt,
    double=white,double distance=0.55pt,fill=gepOrange!13,
    minimum size=3.6mm,inner sep=0pt},
  gep bond/.style={draw=gepInk,line width=1.05pt},
  gep firstbond/.style={draw=gepOrange,line width=1.3pt},
  gep arrow/.style={-{Latex[length=1.65mm,width=1.2mm]},
    draw=gepInk!78,line width=0.8pt},
  gep label/.style={align=center,inner sep=1.2pt},
  gep heading/.style={gep label,
    font=\sffamily\bfseries\fontsize{7}{8.4}\selectfont},
  pics/gepproduct/.style={code={
    \draw[gep bond] (-0.16,0)--(0.41,0);
    \node[gep anchor] at (-0.16,0) {\(j\)};
    \node[gep anchor] at (0.41,0) {\(i\)};
  }},
  pics/gepdeleted/.style={code={
    \node[gep anchor] at (-0.16,0) {\(j\)};
    \node[gep anchor] at (0.41,0) {\(i\)};
  }},
  pics/gepgrown/.style={code={
    \draw[gep firstbond] (-0.73,0)--(-0.16,0);
    \draw[gep bond] (-0.16,0)--(0.41,0);
    \node[gep new] at (-0.73,0) {\(u\)};
    \node[gep anchor] at (-0.16,0) {\(j\)};
    \node[gep anchor] at (0.41,0) {\(i\)};
  }},
  pics/geptarget/.style={code={
    \draw[gep firstbond] (-0.73,0)--(-0.16,0);
    \node[gep new] at (-0.73,0) {\(u\)};
    \node[gep anchor] at (-0.16,0) {\(j\)};
    \node[gep anchor] at (0.41,0) {\(i\)};
  }}
]
\path[use as bounding box] (0,0) rectangle (6.72,3.0);
\node[gep heading,anchor=west] at (0.10,2.80)
  {A\quad Edit order};

\pic at (0.56,1.51) {gepproduct};
\node[gep heading] at (0.75,2.03) {product \(P\)};
\node[gep label] at (0.67,0.98) {\(\Lambda_R=2\)};

\pic at (3.28,2.33) {gepdeleted};
\node[gep label] at (3.36,2.80) {\(\Lambda_R=1\)};
\draw[gep arrow] (1.25,1.70)--(2.82,2.32);
\node[gep label] at (2.02,2.46) {remove bond};

\pic at (3.28,0.68) {gepgrown};
\node[gep label] at (3.06,0.23) {\(\Lambda_R=1\)};
\draw[gep arrow] (1.25,1.32)--(2.25,0.68);
\node[gep label] at (1.65,0.38) {add atom};

\pic at (5.84,1.51) {geptarget};
\node[gep heading] at (5.75,2.03) {same \(R\)};
\node[gep label] at (5.75,0.98) {\(\Lambda_R=0\)};
\node[gep label,text=black!65] at (5.75,0.57) {target reached};
\draw[gep arrow] (3.99,2.32)--(4.81,1.70);
\node[gep label] at (4.51,2.46) {add atom};
\draw[gep arrow] (3.99,0.68)--(4.81,1.32);
\node[gep label] at (4.65,0.18) {remove bond};

\node[gep label,text=gepOrange!85!black] at (3.31,1.51)
  {each arrow\\\(q_R(a\mid X)=1\)};
\end{tikzpicture}%
\endgroup%
}
\end{minipage}\hfill
\begin{minipage}[t]{.48\linewidth}
\vspace{0pt}
\centering
\resizebox{\linewidth}{!}{
\begingroup%
\renewcommand{\sfdefault}{DejaVuSans-TLF}%
\definecolor{payloadBlue}{HTML}{0077BB}%
\definecolor{payloadOrange}{HTML}{EE7733}%
\definecolor{payloadInk}{HTML}{27333C}%
\begin{tikzpicture}[x=1cm,y=1cm,
  font=\sffamily\fontsize{7}{8.4}\selectfont,
  text=payloadInk,
  lab/.style={inner sep=1pt,align=center},
  factor/.style={draw=black!30,rounded corners=2pt,fill=black!2,
    text width=1.75cm,minimum height=.90cm,align=center,inner sep=2pt},
  atom/.style={circle,draw=payloadBlue,fill=payloadBlue!10,
    minimum size=4mm,inner sep=0pt,line width=.7pt},
  new/.style={atom,draw=payloadOrange,fill=payloadOrange!12,
    double=white,double distance=.4pt},
  flow/.style={-{Latex[length=1.5mm]},line width=.7pt,draw=black!65}]
\path[use as bounding box] (0,0) rectangle (6.72,3.0);
\node[lab,anchor=west,font=\sffamily\bfseries\fontsize{7}{8.4}\selectfont]
  at (.02,2.83) {B\quad One atom-addition edit};
\node[factor] (location) at (1.0,2.02)
  {attach to \(j\)\\element \(z\)};
\node[factor] (mark) at (3.3,2.02)
  {\(\mathbf m\)};
\node[factor] (bond) at (5.6,2.02)
  {first bond\\\(\mathbf e\mid\mathbf m,j\)};
\draw[flow] (location)--(mark);
\draw[flow] (mark)--(bond);
\node[atom] (j) at (.45,.65) {\(j\)};
\node[atom] (i) at (1.1,.65) {\(i\)};
\draw[line width=.9pt] (i)--(j);
\node[atom] (jj) at (5.20,.65) {\(j\)};
\node[atom] (ii) at (5.85,.65) {\(i\)};
\node[new] (u) at (5.20,1.23) {\(u\)};
\draw[line width=.9pt] (ii)--(jj);
\draw[payloadOrange,line width=1pt] (jj)--(u);
\draw[flow,line width=.9pt] (1.65,.65)--(4.60,.65);
\node[lab] at (3.1,1.00) {\((u,j,\mathbf m,\mathbf e)\)};
\node[lab,text=black!65] at (3.1,.27) {one graph update};
\node[lab] at (.78,.17) {\(X\)};
\node[lab] at (5.55,.17) {\(T_aX\)};
\end{tikzpicture}%
\endgroup%
}
\end{minipage}
\caption{Edit order and complete graph updates.
A: Two correction orders reach the same graph, with total bridge
intensity \(2\to1\to0\).
B: Location and attributes jointly define one update.
Blue: product atoms; orange: added atoms.}
\label{fig:edit-paths}
\label{fig:complete-edit}
\end{figure}

\subsection{Parameterizing graph edit rates}

The rate of a complete edit factors as
\begin{equation}
 r_\theta(a\mid X,P,t)=
 \lambda_\theta(X,P,t)\,\pi_\theta(a\mid X,P,t),
 \label{eq:rate-factorization}
\end{equation}
where \(\lambda_\theta>0\) is the total edit intensity (the sum of all edit rates) and
\(\sum_{a\in\mathcal A(X)}\pi_\theta(a\mid X,P,t)=1\).
The intensity controls the waiting time until the next edit, and
\(\pi_\theta\) determines which edit occurs.

A graph Transformer \citep{vaswani2017attention} encodes the current graph,
the time, and atom and bond changes relative to the product. Bond features
inform attention, while graph-level features summarize the amount of
structural change. A graph readout predicts total intensity, and
conditional distributions specify the edit location and attributes
(Figure~\ref{fig:model-architecture}). Architecture and training settings
appear in Appendix~\ref{app:model-training}.

\begin{figure}[tb!]
\centering
\resizebox{\textwidth}{!}{
\begingroup%
\renewcommand{\sfdefault}{DejaVuSans-TLF}%
\definecolor{gefArchBlue}{HTML}{0077BB}%
\definecolor{gefArchOrange}{HTML}{EE7733}%
\definecolor{gefArchInk}{HTML}{27333C}%
\begin{tikzpicture}[
  x=1cm,y=1cm,
  font=\sffamily\fontsize{7}{8.4}\selectfont,
  text=gefArchInk,line cap=round,line join=round,
  arch label/.style={align=center,inner sep=1.3pt},
  arch title/.style={arch label,
    font=\sffamily\bfseries\fontsize{7}{8.4}\selectfont},
  arch atom/.style={circle,draw=gefArchBlue,line width=0.8pt,
    fill=gefArchBlue!12,minimum size=3.6mm,inner sep=0pt},
  arch new/.style={circle,draw=gefArchOrange,line width=0.75pt,
    double=white,double distance=0.55pt,fill=gefArchOrange!13,
    minimum size=3.6mm,inner sep=0pt},
  arch bond/.style={draw=gefArchInk,line width=1.05pt},
  arch wire/.style={draw=gefArchInk!78,line width=0.75pt},
  arch flow/.style={arch wire,-{Latex[length=1.7mm,width=1.2mm]}},
  arch pair/.style={draw=gefArchInk!32,line width=0.65pt},
  arch match/.style={draw=gefArchBlue!60,dashed,line width=0.65pt}
]
\path[use as bounding box] (0,0) rectangle (14,3.55);

\node[arch title,anchor=west] at (0.03,3.30) {Aligned graph inputs};
\node[arch title] at (6.17,3.30) {Graph Transformer};
\node[arch title] at (11.10,3.30) {Complete-edit rates};

\node[arch label,anchor=west] at (0.03,2.61) {\(P\)};
\node[arch label,anchor=west] at (0.03,1.50) {\(X_t\)};
\draw[arch bond] (0.87,2.61)--(1.58,2.61);
\draw[arch match] (0.87,2.39)--(0.87,1.72);
\draw[arch match] (1.58,2.39)--(1.58,1.72);
\draw[draw=gefArchOrange,line width=1.3pt] (1.58,1.50)--(2.29,1.50);
\node[arch atom] at (0.87,2.61) {\(i\)};
\node[arch atom] at (1.58,2.61) {\(j\)};
\node[arch atom] at (0.87,1.50) {\(i\)};
\node[arch atom] at (1.58,1.50) {\(j\)};
\node[arch new] at (2.29,1.50) {\(u\)};
\node[arch label,text=black!65] at (1.22,0.91) {aligned};
\node[arch label,text=black!65] at (2.29,0.91) {new};

\draw[arch wire,rounded corners=2pt]
  (1.85,2.61)--(2.71,2.61)--(2.71,2.10);
\draw[arch wire,rounded corners=2pt]
  (2.55,1.50)--(2.71,1.50)--(2.71,2.10);
\fill[gefArchInk!78] (2.71,2.10) circle (0.025);
\draw[arch flow] (2.71,2.10)--(4.85,2.10);
\node[arch label] at (3.62,2.57) {atom / bond\\features};
\node[arch label] at (3.78,1.45) {differences\\from \(P\)};

\draw[arch pair] (5.38,2.45)--(6.94,2.45);
\draw[arch pair] (5.38,2.45)--(6.16,1.48);
\draw[draw=gefArchOrange,line width=1.3pt] (6.94,2.45)--(6.16,1.48);
\draw[arch pair,-{Latex[length=1.35mm,width=1.0mm]}]
  (5.56,2.62) to[out=29,in=151] (6.76,2.62);
\draw[arch pair,-{Latex[length=1.35mm,width=1.0mm]}]
  (6.72,2.30) to[out=209,in=-29] (5.60,2.30);
\node[arch atom] at (5.38,2.45) {\(i\)};
\node[arch atom] at (6.94,2.45) {\(j\)};
\node[arch new] at (6.16,1.48) {\(u\)};
\node[arch label] at (6.17,2.97) {attention with bond features};
\node[arch label] at (6.48,0.91) {node and edge\\updates};

\node[arch label,anchor=west] (archContext) at (0.03,0.28)
  {\(t\)\quad change counts\quad pooled \(P,X_t\)};
\draw[arch flow,rounded corners=2pt]
  (archContext.east)--(4.76,0.28)--(4.76,1.42)--(5.64,1.67);

\draw[arch wire] (7.31,2.10)--(7.83,2.10);
\node[arch label] (archPool) at (9.18,2.68) {graph readout};
\node[arch label] (archHeads) at (9.18,1.43) {edit probabilities};
\node[arch label] (archLambda) at (11.16,2.68) {\(\lambda_\theta>0\)};
\node[arch label] (archPi) at (11.16,1.43) {\(\pi_\theta(a)\)};
\draw[arch flow,rounded corners=2pt]
  (7.83,2.10)--(7.83,2.68)--(archPool.west);
\draw[arch flow,rounded corners=2pt]
  (7.83,2.10)--(7.83,1.43)--(archHeads.west);
\draw[arch flow] (archPool.east)--(archLambda.west);
\draw[arch flow] (archHeads.east)--(archPi.west);
\node[arch label,text=black!65] at (10.68,2.22) {total intensity};
\node[arch label,text=black!65] at (10.68,0.97) {complete-edit probability};

\draw[arch wire,rounded corners=2pt]
  (archLambda.east)--(12.02,2.68)--(12.02,2.10);
\draw[arch wire,rounded corners=2pt]
  (archPi.east)--(12.02,1.43)--(12.02,2.10);
\node[circle,fill=white,draw=gefArchInk!65,line width=0.65pt,
  minimum size=4.2mm,inner sep=0pt] at (12.02,2.10) {\(\times\)};
\draw[arch flow] (12.26,2.10)--(12.83,2.10);
\node[arch label] at (13.43,2.10) {\(r_\theta(a)\)};
\node[arch label] at (11.11,0.28)
  {edit type \(\to\) location \(\to\) attributes};
\end{tikzpicture}%
\endgroup%
}
\caption{Edit-rate prediction. Intensity and complete-edit probability
determine each rate. Dashed links align product atoms; gray links inside the
Transformer denote feature interactions, not molecular bonds.}
\label{fig:model-architecture}
\end{figure}

\paragraph{Edit probabilities.}
The action distribution accounts for both the edit location and its attributes. To
illustrate, let \(Z=(P,X,t)\), let \(\mathbf m=(z,m_2,\ldots,m_6)\)
be a tuple of atom attributes with atomic number \(z\), and let \(\mathbf e\)
be a tuple of bond attributes. For an attached atom addition, denoted \(f_{\mathrm A}\), at existing atom \(j\),
the probability factors as
\begin{align}
 \pi_\theta(f_{\mathrm A},j,\mathbf m,\mathbf e\mid Z)
 &=\pi_F(f_{\mathrm A}\mid Z)\,
   \pi_{J,z}(j,z\mid f_{\mathrm A},Z)\nonumber\\
 &\quad\times\pi_M(m_{2:6}\mid j,z,Z)\,
   \pi_E(\mathbf e\mid\mathbf m,j,Z).
 \label{eq:complete-action-factorization}
\end{align}
The attachment site and new element are selected jointly, followed by
autoregressive prediction of the remaining atom attributes. Conditioned
on the complete new atom, the bond decoder jointly predicts the first
bond's type and stereochemistry.
These factors define one update (Figure~\ref{fig:complete-edit}B); the
remaining edit types use analogous factorizations. Training and sampling
use the same complete-action probabilities
(Appendix~\ref{app:alg-factorization}).

\subsection{Learning from endpoint bridges}

For a sampled training time \(t\), we simulate the bridge to obtain
\(X_t\) and its target rates \(q_R(a\mid X_t)\). We minimize generalized KL
divergence between target and predicted rates. Omitting a target-only
constant, the loss for one training state is
\begin{equation}
 \mathcal L_{\mathrm{rate}}=
 \lambda_\theta-\Lambda_R\log\lambda_\theta
 -\sum_{a:q_R(a\mid X_t)>0}q_R(a\mid X_t)
    \log\pi_\theta(a\mid X_t,P,t).
 \label{eq:gkl}
\end{equation}
The first two terms fit total intensity, and the last fits the corrective-edit
distribution. Only actions with nonzero target rates require explicit
probabilities: normalization of \(\pi_\theta\) makes the summed predicted
rate over all admissible actions equal to \(\lambda_\theta\).

Unlike normalized next-edit classification, rate matching also fits the
pace of graph change. At an endpoint, \(\Lambda_R=0\) and the loss reduces
to \(\lambda_\theta\), supervising low outgoing intensity without a stop
label. Appendix~\ref{app:rate-decomposition} derives the decomposition into intensity
and action-choice terms, including this zero-rate case.

For unperturbed bridges, the unrestricted population optimum over
nonnegative rates is
\begin{equation}
 r^*(a\mid X,P,t)=\mathbb E[q_R(a\mid X)\mid X_t=X,P,t].
 \label{eq:main-marginal-rate}
\end{equation}
The expectation includes hidden endpoints and atom correspondences.
Under the conditions of Proposition~\ref{prop:main-marginal} in
Appendix~\ref{app:marginal-consistency}, these rates define a product-initialized
process with the conditional bridge's one-time graph marginals
\citep{holderrieth2024generatormatching}.
Since each bridge reaches its target, this idealized process recovers the
conditional reactant distribution of the paired data as \(t\to1\).

\paragraph{Recovery augmentation.}
Unperturbed bridges add only target atoms. To supervise removal and repair,
we perturb bridge states with unwanted bonds or superfluous generated leaf
atoms and recompute corrective rates toward the same endpoint. This trains
deletion and structural revision alongside graph growth, without a prescribed
recovery sequence (Appendix~\ref{app:model-training}).

\subsection{Reactant generation}

At inference, \model{} starts from the product and samples edit times and
actions from the learned rates. The time grid controls numerical integration,
not the number of edits. Rates are recomputed after every edit, so several
edits can occur within one interval and newly added atoms are immediately
available for attachment or revision. At a finite terminal time, graphs are
decoded into reactant sets. Because every action is applied to the current
graph, the same representation can accommodate different numbers of
components and edits without prescribing their order. Growth, disconnection,
and scaffold revision consequently remain within one generation process.
A newly introduced component can be extended by subsequent edits, and its
bonds or attributes can be revised as the surrounding graph changes.
Likewise, inherited product structure remains available as context for later
actions rather than being frozen after the initial prediction. The terminal
graph therefore records missing component structure and changes to the
retained scaffold in one state description. This shared process does not
require a separate stage for each transformation size or a predefined order
of operations. The terminal distribution is consequently defined by the same
edit space that is supervised during training.
Appendix~\ref{app:alg-sampling} gives the sampling details.

\section{Experiments}
\label{sec:experiments}

\subsection{Experimental setup}

\captionsetup{font=small,position=top,skip=3pt,singlelinecheck=false}
\vspace{-8pt}
\begin{wraptable}[27]{r}{0.50\textwidth}
\vspace{-\intextsep}
\centering
\caption{USPTO-Full test results. Top-$k$ accuracy (\%).}
\label{tab:full-results}
\tableformat
\renewcommand{\arraystretch}{0.98}
\begin{tabular}{@{}lrrrr@{}}
\toprule
Method & Top-1 & Top-3 & Top-5 & Top-10 \\
\midrule
RetroSim & 32.8 & -- & -- & 56.1 \\
LocalRetro & 39.1 & 53.3 & 58.4 & 63.7 \\
GLN & 39.3 & -- & -- & 63.7 \\
RetroPrime & 44.1 & 59.1 & 62.8 & 68.5 \\
R-SMILES & 48.9 & 66.6 & 72.0 & 76.4 \\
NAG2G & 49.7 & 64.6 & 69.3 & 74.0 \\
\midrule
RetroDiT (Pred.) & 51.3 & 67.8 & 72.3 & 75.8 \\
\textbf{\model{} (Pred.)} & \textbf{54.3} & \textbf{70.1} & \textbf{74.1} & \textbf{77.1} \\
\midrule
RSGPT & 59.2 & 74.2 & 78.2 & 82.1 \\
RetroDiT (Oracle) & 63.4 & 77.6 & 80.9 & 83.6 \\
\textbf{\model{} (Oracle)} & \textbf{68.6} & \textbf{80.7} & \textbf{82.8} & \textbf{84.1} \\
\bottomrule
\end{tabular}
\vspace{1pt}
\caption{USPTO-50K test results. Top-$k$ accuracy (\%).}
\label{tab:50k-results}
\tableformat
\begin{tabular}{@{}lrrrr@{}}
\toprule
Method & Top-1 & Top-3 & Top-5 & Top-10 \\
\midrule
RetroSim & 37.3 & 54.7 & 63.6 & 74.1 \\
Retro ProdFlow~\citep{yadav2025retrosynflow} & 50.0 & 74.3 & 81.2 & 85.8 \\
RetroBridge & 50.8 & 74.1 & 80.6 & 85.6 \\
RetroPrime & 51.4 & 70.8 & 74.0 & 76.1 \\
GLN & 52.5 & 69.0 & 75.6 & 83.7 \\
LocalRetro & 53.4 & 77.5 & 85.9 & 92.4 \\
NAG2G & 55.1 & 76.9 & 83.4 & 89.9 \\
Graph2Edits & 55.1 & 77.3 & 83.4 & 89.4 \\
R-SMILES & 56.3 & 79.2 & 86.2 & 91.0 \\
Retro SynFlow & 60.0 & 77.9 & 82.7 & 85.3 \\
\midrule
RetroDiT (Pred.) & 61.2 & 81.5 & 86.2 & 89.2 \\
\textbf{\model{} (Pred.)} & \textbf{62.4} & \textbf{82.8} & \textbf{88.6} & \textbf{91.9} \\
\midrule
RSGPT$^\ast$ & 26.4 & 37.5 & 41.4 & 46.4 \\
RSGPT & 63.4 & 84.2 & 89.2 & 93.0 \\
RetroDiT (Oracle) & 71.1 & 90.8 & 94.5 & 96.0 \\
\textbf{\model{} (Oracle)} & \textbf{72.3} & \textbf{92.9} & \textbf{96.0} & \textbf{97.0} \\
\bottomrule
\end{tabular}
\vspace{-2pt}
\end{wraptable}

We evaluate on USPTO-Full and USPTO-50K, derived from patent reaction
records \citep{schneider2016roles,lowe2017uspto}, using the benchmark splits
distributed with RetroDiT \citep{wang2026retrodit}.
Models are trained separately on each dataset and evaluated with predicted
(Pred.) or reference (Oracle) reaction-center information. We sample 100
trajectories per product using 50 sampling intervals, merge equivalent
reactant sets, and rank them by frequency. Top-$k$ accuracy follows the
atom-and-bond topology matching of \citet{wang2026retrodit}.
Appendix~\ref{app:implementation} specifies the architecture, training,
decoding configurations, and matching rules.

The comparisons cover complementary approaches to retrosynthesis:
template-based prediction with RetroSim~\citep{coley2017retrosim},
GLN~\citep{dai2019gln}, and LocalRetro~\citep{chen2021localretro};
sequence generation with RetroPrime~\citep{wang2021retroprime},
R-SMILES~\citep{zhong2022rsmiles}, and RSGPT~\citep{deng2025rsgpt};
graph translation and editing with NAG2G~\citep{yao2023nag2g} and
Graph2Edits~\citep{zhong2023graph2edits}; and stochastic graph generation
with RetroBridge~\citep{igashov2024retrobridge}, Retro ProdFlow and
Retro SynFlow~\citep{yadav2025retrosynflow}, and
RetroDiT~\citep{wang2026retrodit}. We report the baseline scores from the cited
sources,
with Pred. and Oracle comparisons shown separately for center-conditioned
generation. RSGPT includes its pretrained model and fine-tuning-only comparison
($^\ast$). Appendix~\ref{app:benchmark-conventions} gives the benchmark
conventions.

\subsection{Main results}

With predicted centers, \model{} reaches Top-1/Top-10 accuracies of
54.3/77.1\% on USPTO-Full and 62.4/91.9\% on USPTO-50K
(Tables~\ref{tab:full-results} and~\ref{tab:50k-results}).
Reference centers further improve these results to 68.6/84.1\% and
72.3/97.0\%, respectively. Within the center-conditioned comparisons,
\model{} exceeds RetroDiT at all four cutoffs on both datasets.
The USPTO-Full Top-1 gains are 3.0 points under Pred. and 5.2 under Oracle;
on USPTO-50K, the corresponding gains are 1.2 points in both conditions,
with Top-10 increasing by 2.7 and 1.0 points. The gains at both ends of the
ranking indicate more accurate first predictions as well as better
recovery of the recorded reactants among the ranked alternatives.
Top-1 therefore measures the reliability of the first proposed reactant set,
while Top-10 tests whether the ranking preserves complementary valid sets.
The gains at both cutoffs show that the improvement is not only broader
coverage: it also changes which candidate is ranked first.
In both settings, the center signal identifies where a transformation may
occur but leaves the missing reactant structure to be generated. Recovering
complete candidates therefore still requires decisions about fragment size,
connectivity, and revisions to the product graph. The two benchmarks show
that learning these decisions through complete edits can support accurate
ranking across distinct levels of reaction-center information.
The ordering is consistent across the two datasets despite their different
scales and product distributions: USPTO-Full stresses variable-size
components, whereas USPTO-50K provides a more constrained class composition.
The Pred.--Oracle gap therefore reflects the information supplied about the
reaction center, while the decoder still has to choose component structure and
connectivity in both cases. This separation motivates the structural analyses
below.

\subsection{Graph growth and transformation complexity}
\label{sec:graph-growth}
\label{sec:structural-difficulty}

Reactant reconstruction requires both introducing atoms absent from the
product and revising inherited structure. We examine these demands
separately on 90,598 shared USPTO-Full products, using paired predictions
from \model{} and RetroDiT \citep{wang2026retrodit}.
Grouping by added atoms (Table~\ref{tab:full-growth-main}), \model{}
retains a Top-1 advantage in every group requiring graph growth under
both center conditions. For 11--20 added atoms, Oracle Top-1 rises from
26.0\% to 33.4\% and Top-3 from 40.1\% to 45.0\%.
Among these products, 823 have a correct first prediction only from
\model{}, compared with 406 only from RetroDiT, giving 417 additional
correct first predictions. At the other end of the range, reactions
requiring no added atoms isolate reconstruction through revisions to
existing structure: Oracle Top-1 increases from 71.9\% to 84.8\%, with
gains at all four cutoffs in both conditions. The advantage therefore
extends to both growth-intensive reactions and transformations confined
to the inherited atoms.

\begin{wraptable}[9]{r}{0.49\textwidth}
\vspace{-\intextsep}
\centering
\captionsetup{font=small,skip=3pt,justification=raggedright,singlelinecheck=false}
\caption{Transformation complexity on USPTO-Full. Top-1 accuracy (\%) on
90,598 shared test products.}
\label{tab:transformation-complexity-main}
\tableformat
\begin{tabular}{@{}lrrrr@{}}
\toprule
 & \multicolumn{2}{c}{Pred.} & \multicolumn{2}{c}{Oracle} \\
\cmidrule(lr){2-3}\cmidrule(lr){4-5}
Graph changes & RetroDiT & \model{} & RetroDiT & \model{} \\
\midrule
$\leq5$ & 65.1 & \textbf{67.6} & 78.7 & \textbf{81.9} \\
6--10 & 45.1 & \textbf{48.8} & 57.3 & \textbf{63.2} \\
11--20 & 32.6 & \textbf{36.8} & 43.8 & \textbf{52.1} \\
$>20$ & 25.7 & \textbf{29.5} & 33.6 & \textbf{42.5} \\
\bottomrule
\end{tabular}
\vspace{-2pt}
\end{wraptable}

The number of added atoms alone does not describe the extent of a
transformation. Table~\ref{tab:transformation-complexity-main} instead
groups reactions by atom additions, updated product-atom attributes, and
added, removed, or modified bonds, without assuming an edit order.
\model{} improves Top-1 in every group under both conditions, and the
Oracle advantage widens as the number of changes increases.
For more than twenty changes, Oracle Top-1 reaches
42.5\% versus 33.6\%, and Pred. reaches 29.5\% versus 25.7\%, with gains
at all four cutoffs. These results show that the improvement persists
when reconstructing the reactants involves many structural changes,
not just the introduction of additional atoms. The complete paths in
Appendix~\ref{app:sampled-trajectories} illustrate the same behavior at the
trajectory level, including examples that alternate growth with bond and
attribute updates.

\begin{wraptable}[10]{r}{0.52\textwidth}
\vspace{-\intextsep}
\centering
\captionsetup{font=small,skip=3pt,justification=raggedright,singlelinecheck=false}
\caption{USPTO-Full by added atoms. Oracle test accuracy (\%) under the
common decoding budget.}
\label{tab:full-growth-main}
\tableformat
\begin{tabular}{@{}l@{\hspace{2pt}}rrrr@{\hspace{2pt}}rrrr@{}}
\toprule
\multirow{2}{*}{\shortstack[l]{Added\\atoms}}
 & \multicolumn{4}{c}{\model{}} & \multicolumn{4}{c}{RetroDiT} \\
\cmidrule(lr){2-5}\cmidrule(lr){6-9}
 & 1 & 3 & 5 & 10 & 1 & 3 & 5 & 10 \\
\midrule
0 & \textbf{84.8} & \textbf{91.5} & \textbf{93.0} & \textbf{93.6}
 & 71.9 & 85.4 & 88.2 & 90.5 \\
1--2 & \textbf{79.2} & \textbf{90.6} & \textbf{92.0} & \textbf{92.7}
 & 76.4 & 88.6 & 90.8 & 92.4 \\
3--5 & \textbf{56.0} & \textbf{71.1} & \textbf{74.7} & \textbf{77.0}
 & 49.4 & 67.7 & 72.5 & 76.6 \\
6--10 & \textbf{54.4} & \textbf{68.1} & \textbf{70.9} & \textbf{73.1}
 & 47.1 & 63.8 & 68.6 & 72.4 \\
11--20 & \textbf{33.4} & \textbf{45.0} & \textbf{47.8} & 50.0
 & 26.0 & 40.1 & 45.6 & \textbf{50.4} \\
\bottomrule
\end{tabular}
\vspace{-2pt}
\end{wraptable}

\begin{figure}[tb!]
\centering
\includegraphics[width=\linewidth]{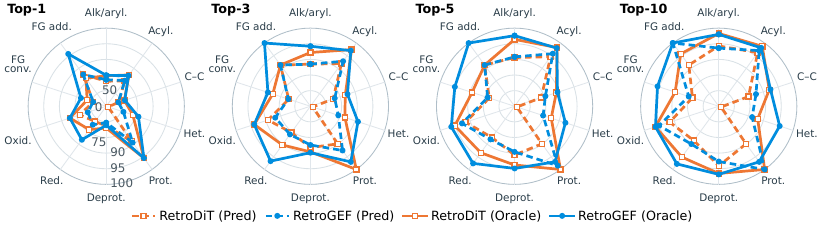}
\caption{Accuracy across all ten USPTO-50K reaction classes.
Shared nonuniform radial scale (\%), labeled in the first panel.
FG: functional group.}
\label{fig:reaction-class-profiles}
\end{figure}

\subsection{Performance across reaction classes}
\label{sec:reaction-classes}

Reaction classes provide a chemical perspective complementary to the
structural groupings above. Figure~\ref{fig:reaction-class-profiles}
compares all ten USPTO-50K classes \citep{schneider2016roles} on 4,930
shared products, using the same edit vocabulary and generation process
throughout. The improvements cover several distinct transformation
families. Reduction improves at every cutoff in both conditions: Pred.
Top-1/Top-10 rises from 55.2/85.7\% to 61.6/89.8\%, while Oracle rises
from 71.3/95.1\% to 84.8/97.8\%. Heterocycle formation also improves
throughout the Oracle ranking, reaching 77.5\% Top-1 and 95.5\% Top-10
versus 69.7\% and 84.3\%. Under Pred., C--C formation improves Top-3/5/10
from 63.2/69.9/75.0\% to 65.9/76.1/82.6\%, and functional-group
interconversion improves Top-10 from 69.2\% to 75.3\%; its Oracle
Top-10 rises from 89.0\% to 95.6\%. These results place the gains across
reduction, ring formation, carbon--carbon coupling, and functional-group
changes, rather than restricting them to one reaction family.
Table~\ref{tab:reaction-classes} provides the complete class-wise results.

Taken together, the structural and class-wise analyses separate two aspects
of the task. Added-atom counts measure how much new material must enter the
reactant set, whereas graph-change counts also include revisions to inherited
atoms and bonds. The gains in both views, including the zero-added-atom group,
show that the method is not relying on growth alone. A single complete-edit
state can instead alternate between adding a component, changing its
connectivity, and revising the product-derived scaffold. The complete
class-wise values behind Figure~\ref{fig:reaction-class-profiles} are given in
Appendix~\ref{app:paired-groups}, allowing the same pattern to be inspected
across chemically distinct reaction families.

\Needspace{17\baselineskip}
\subsection{Training and sampling ablations}
\label{sec:ablations}

\begin{wraptable}{r}{0.42\textwidth}
\vspace{-\intextsep}
\centering
\captionsetup{font=small,skip=3pt,justification=raggedright,singlelinecheck=false}
\caption{Training and architecture variants. Test accuracy (\%) on
USPTO-50K and USPTO-Full.}
\label{tab:architecture-ablation-main}
\tableformat
\begin{tabular*}{\linewidth}{@{\extracolsep{\fill}}lrr@{\hspace{6pt}}rr@{}}
\toprule
 & \multicolumn{2}{c}{Oracle} & \multicolumn{2}{c}{Pred.} \\
\cmidrule(lr){2-3}\cmidrule(lr){4-5}
Variant & Top-1 & Top-10 & Top-1 & Top-10 \\
\midrule
\multicolumn{5}{@{}l}{\emph{USPTO-50K}} \\
Reference & 72.3 & \textbf{97.0} & \textbf{62.4} & \textbf{91.9} \\
Constant addition rate & 72.1 & 93.0 & 61.4 & 88.1 \\
Uniform time sampling & 71.6 & 95.8 & 61.2 & 90.5 \\
No edge-feature updates & \textbf{73.8} & 96.4 & \textbf{62.4} & 91.5 \\
\midrule
\multicolumn{5}{@{}l}{\emph{USPTO-Full}} \\
Reference & \textbf{68.6} & 84.1 & \textbf{54.3} & \textbf{77.1} \\
Constant addition rate & 64.9 & 79.3 & 50.0 & 70.1 \\
No edge-feature updates & 68.4 & \textbf{84.7} & 52.1 & 74.8 \\
\bottomrule
\end{tabular*}
\vspace{5pt}
\caption{Supervision comparison on USPTO-50K. Accuracy (\%): reference on test,
training variants on validation. Column maxima are bold.}
\label{tab:supervision-main}
\tableformat
\begin{tabular*}{\linewidth}{@{\extracolsep{\fill}}lrr@{\hspace{6pt}}rr@{}}
\toprule
 & \multicolumn{2}{c}{Pred.} & \multicolumn{2}{c}{Oracle} \\
\cmidrule(lr){2-3}\cmidrule(lr){4-5}
Training & Top-1 & Top-10 & Top-1 & Top-10 \\
\midrule
Reference & 62.4 & \textbf{91.9} & 72.3 & \textbf{97.0} \\
\midrule
+ Partial hints & 62.8 & 90.4 & 72.9 & 94.1 \\
+ Recovery & 60.2 & 88.0 & \textbf{73.8} & 95.6 \\
+ Both & \textbf{63.3} & 90.2 & 73.2 & 95.1 \\
\bottomrule
\end{tabular*}
\vspace{5pt}
\caption{Sampler ablation. USPTO-50K Oracle test accuracy (\%).}
\label{tab:sampling-main}
\tableformat
\begin{tabular*}{\linewidth}{@{\extracolsep{\fill}}lrrrr@{}}
\toprule
Sampling & Top-1 & Top-3 & Top-5 & Top-10 \\
\midrule
Stratified & 71.2 & 91.5 & 94.7 & 95.6 \\
\textbf{Independent} & \textbf{72.3} & \textbf{92.9} & \textbf{96.0} & \textbf{97.0} \\
\bottomrule
\end{tabular*}
\vspace{-2pt}
\end{wraptable}

Table~\ref{tab:architecture-ablation-main} compares training and architecture
variants with the reference results. Constant addition rate fixes the total growth
rate to one while atoms remain missing, whereas uniform time sampling removes
the late-state emphasis in the training distribution. The edge-feature
variant removes Transformer edge-state updates while retaining bond inputs and
bond-edit actions. The constant-rate variant reaches Top-10 of 79.3\% under
Oracle and 70.1\% under Pred. on USPTO-Full, below the reference in both cases.
The USPTO-50K variants likewise show lower Top-10 scores with constant-rate
addition or uniform time sampling. Edge-feature updates have a smaller and
less consistent effect across the two conditions. Thus, the rate controls
change how the model allocates capacity over the construction path, whereas
the edge-state control leaves the principal growth signal intact.
The complete rankings are given in Appendix~\ref{app:architecture-variants}.

Complete hints supply the full reference center set; partial hints replace it
with one reference-center atom on half the training examples. Recovery
perturbs intermediate graphs while retaining the original reactant target.
The training variants in Table~\ref{tab:supervision-main} use a fixed training
budget and one shared center predictor; validation uses 100 trajectories and 50
sampling intervals. Partial hints reach Pred. Top-1/Top-10 of 62.8/90.4\%,
while recovery reaches Oracle Top-10 of 95.6\%.
Combining both changes gives the highest Pred. Top-1 among these validation
variants, 63.3\%. The two training choices emphasize different parts of the
conditional construction problem: partial hints improve localization under
Pred., while recovery improves completion from a reference center.
The complete rankings are provided in Appendix~\ref{app:training-controls}.

Table~\ref{tab:sampling-main} compares independent and stratified sampling,
each using 100 trajectories and 50 sampling intervals.
Independent sampling draws random numbers separately for each trajectory;
stratified sampling partitions \([0,1]\) into 100 equal intervals and assigns
one to each trajectory for each edit-selection draw.
Both procedures keep the model and candidate budget unchanged, so this
comparison concerns the allocation of sampling variability rather than model
capacity. Independent sampling remains the stronger configuration at the reported
cutoffs. We therefore use it for the main comparisons and retain the
stratified result as a controlled sampling check.

\Needspace{14\baselineskip}
\subsection{Reactant construction examples}

\begin{wrapfigure}[16]{r}{0.43\textwidth}
\vspace{-\intextsep}
\centering
\includegraphics[width=\linewidth]{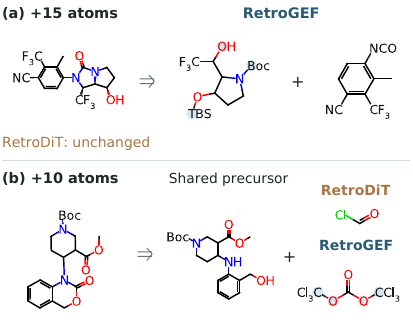}
\captionsetup{skip=3pt,justification=raggedright}
\caption{USPTO-Full Oracle Top-1 predictions.}
\label{fig:fragment-comparison}
\end{wrapfigure}

Complete reactant reconstruction can require recovering separate components
alongside changes to the product-derived scaffold.
Figure~\ref{fig:fragment-comparison} shows two Oracle examples involving
15 and 10 additional atoms.
In (a), \model{} recovers the Boc- and silyl-protected precursor together
with an aryl isocyanate, whereas RetroDiT leaves the product unchanged.
In (b), both methods recover the ring-opened amino alcohol, but only
\model{} reconstructs triphosgene; RetroDiT predicts $\mathrm{O{=}CCl}$.
Both \model{} predictions match the complete reference topology.
The second case separates scaffold recovery from reagent completion:
the correct scaffold alone is insufficient. A growing graph represents
both within one state, allowing generated structure to receive further
edits alongside revisions to inherited atoms.

\begin{figure}[tb!]
\vspace{-3pt}
\centering
\includegraphics[width=\linewidth]{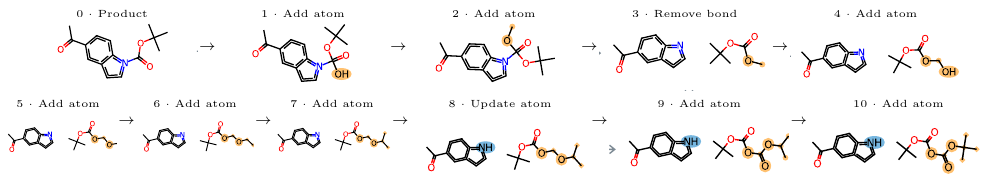}
\captionsetup{font=small,skip=2pt,justification=centering,singlelinecheck=false}
\caption{A sampled ten-edit construction path for a USPTO-50K protection
reaction. The two rows list edits 0--4 and 5--10; the final panel is the
complete reactant set.}
\label{fig:protection-path}
\end{figure}

Figure~\ref{fig:protection-path} follows this construction through ten
edits, from a 19-atom protected indole to a two-component reactant set
with eight additional atoms. Two additions precede cleavage of the
N--carbonyl connection at edit three. Growth then continues on the
separated fragment, completing di-tert-butyl dicarbonate, while an atom
update restores the indole nitrogen. Disconnection therefore need not finish
before growth begins: generated structure remains available for revision
throughout the path. This is a graph-construction trace, not a chemical
mechanism; the paired record specifies reactants but not this order.
Appendix~\ref{app:sampled-trajectories} provides paths from all ten
USPTO-50K reaction classes and three USPTO-Full graph-edit families,
including a 14-edit construction from a 48-atom product to a 52-atom
reactant set.

The path makes the variable-size representation concrete: new atoms enter only
when requested, and the resulting component remains available for later edits.
Pred. and Oracle change the initial center information, but both construct the
same state; the gains across added-atom, graph-change, and reaction-class groups
therefore reflect one shared edit space. The endpoint formulation supplies
intermediate states without a recorded order, so the sampled path shows one
consistent construction order rather than a chemical mechanism, alternating
between component growth, connectivity change, and inherited-atom revision.

\section{Conclusion}
\label{sec:conclusion}

\model{} learns variable-size reactant construction through continuous-time
complete graph edits, coupling atom creation with its first bond so generated
structure can participate in later growth and revision. Paired-graph bridges
provide supervision across alternative construction orders without requiring
recorded edit sequences. Experiments on USPTO-Full and USPTO-50K support the
formulation under predicted and reference reaction-center conditioning.
Controlled ablations and structural analyses connect the gains to growth-aware
editing, missing-fragment construction, and revision of inherited structure.
The sampled paths make these operations explicit within individual predictions,
linking endpoint supervision to the decisions required during generation.

\section*{Reproducibility Statement}

Appendices~\ref{app:representation}--\ref{app:alg-details} detail the representation, evaluation protocol, additional
analyses, qualitative construction paths, and implementation details.

\section*{AI Use Statement}

AI tools assisted with literature discovery, paper-structure review,
code review, and language and \LaTeX{} editing.
The authors performed and verified the experiments, checked the reported
numerical claims, reviewed all AI-assisted material, and remain
responsible for the scientific content.

\begingroup
\normalsize
\bibliography{references}
\bibliographystyle{iclr2027_conference}
\endgroup

\appendix
\renewcommand{\tableformat}{%
  \small
  \setlength{\tabcolsep}{4pt}%
  \renewcommand{\arraystretch}{1.12}%
  \setlength{\aboverulesep}{0.5ex}%
  \setlength{\belowrulesep}{0.5ex}%
}

\section{Representation and reconstruction}
\label{app:representation}

\paragraph{Atom and bond attributes.}
Atom attributes are represented by
\((z,q,h,r,\iota,\chi)\): atomic number, formal charge, explicit-hydrogen count,
radical-electron count, isotope, and atom chirality. Bond attributes are represented by
\((b,s)\): bond type and bond stereochemistry. Aromaticity, conjugation, hybridization,
and implicit valence are not independent generated channels; RDKit derives
them after the explicit graph has been reconstructed.

\paragraph{Atom-order equivalence.}
For a reactant graph \(R\), let
\begin{equation}
  \langle R\rangle=\{\gamma R:\gamma\in\mathfrak S(V_R)\}
  \label{eq:chemical-orbit}
\end{equation}
denote all permutations of atom indices for the same attributed graph. Molecular matching
compares this equivalence class after removing atom maps and canonicalizing
the decoded components.

\paragraph{Atom indices and correspondence.}
Atom mapping identifies corresponding atoms across a reaction;
RXNMapper, for example, infers these correspondences from learned attention
\citep{schwaller2021rxnmapper}. We use the provided atom maps to partition
reactant atoms into inherited product atoms and reactant-only atoms.
The indices of product atoms, \(0,\ldots,|V_P|-1\), are
fixed. Let \(\Sigma_{R\mid P}\) be the permutations that fix these product atoms and
permute only the indices of reactant-only atoms. The target graph represents the
orbit
\begin{equation}
  [R]_P=\{\sigma R:\sigma\in\Sigma_{R\mid P}\}.
  \label{eq:product-relative-orbit-app}
\end{equation}
Terminal decoding removes atom indices and maps every representative of
\([R]_P\) to the same molecular equivalence class \(\langle R\rangle\). In the
quotient bridge, a reactant-only atom receives the next available graph index when created;
the corresponding permutation is applied consistently to the endpoint and all
subsequent bridge targets.

\paragraph{Graph changes relative to the product.}
The representation records the atoms and bonds that differ from \(P\): complete
reactant-only atoms, changed product-atom attributes, missing product bonds, added bonds,
and changed bond fields. Applying these changes to \(P\) creates a labelled
representative of \([R]_P\). Because all explicitly represented atom and bond attributes are
either inherited unchanged or specified by these changes, decoding
and canonicalization recover the same chemical graph whenever RDKit accepts
the explicit valence state. Preprocessing rejects records that fail this
reconstruction check.

In the processed benchmark pairs, the full target reactant set retains the
product atoms, so its total atom count is unchanged or larger. It is larger
for 93.4\% and 92.3\% of evaluated USPTO-50K and USPTO-Full test products,
respectively.

\section{Graph-edit vocabulary and bridge construction}
\label{app:events}

\begin{table}[!htbp]
\centering
\caption{Complete graph edits and their state effects.
A selected edit contains all fields needed to construct its deterministic
successor. Here \(\mathbf m\) and \(\mathbf e\) are complete atom and bond
attribute tuples, and \(d(u)\) is the current degree of a deleted atom.
All edit types preserve unrelated graph records.}
\label{tab:edit-vocabulary}
\tableformat
\begin{tabular}{@{}lllcc@{}}
\toprule
\multirow{2}{*}{Edit type} & \multirow{2}{*}{Location}
 & \multirow{2}{*}{Attributes} & \multicolumn{2}{c}{State change} \\
\cmidrule(lr){4-5}
 & & & \(\Delta|V|\) & \(\Delta|E|\) \\
\midrule
Attached atom addition & existing atom & \(\mathbf m,\mathbf e\) & \(+1\) & \(+1\) \\
Isolated atom addition & graph & \(\mathbf m\) & \(+1\) & \(0\) \\
Generated-atom deletion & most recent generated leaf/isolated \(u\) & none & \(-1\) & \(-d(u)\) \\
\midrule
Bond addition & absent unordered pair & \(\mathbf e\) & \(0\) & \(+1\) \\
Bond deletion & existing bond & none & \(0\) & \(-1\) \\
\midrule
Atom-attribute update & existing atom & replacement \(\mathbf m\) & \(0\) & \(0\) \\
Bond-attribute update & existing bond & replacement \(\mathbf e\) & \(0\) & \(0\) \\
\bottomrule
\end{tabular}
\end{table}

\paragraph{Corrective edits.}
For a recorded reactant graph \(R\), let \(\mathcal C_R(X)\) contain one
correction for each mismatched atom tuple shared by the two graphs, one
deletion for each bond absent from the target graph, one attribute update for each
mismatched bond tuple, and one addition for each missing target-graph bond whose
atoms already exist. Attribute corrections write the complete target tuple.
These edits have unit rate in \(\tau\). On bridge paths, created atoms always
belong to the target graph. Deleting a generated atom has zero target rate. The
training perturbations in Appendix~\ref{app:model-training} can create superfluous
atoms and thereby provide examples of this correction.

\paragraph{Atom-addition distribution.}
Let \(m_R(X)\) be the number of missing reactant-only atoms. The total
atom-addition rate is \(m_R(X)\), distributed according to
\(\rho_R(a\mid X)\) in Equation~\ref{eq:bridge-target}. Write this
rate measure as \(q_R^{\mathrm B}(a\mid X)=m_R(X)\rho_R(a\mid X)\).
For the quotient construction used by the conditioned generator, a missing atom
with a target neighbour proposes an attached atom addition for each such neighbour.
When at least one attached proposal exists,
only attached additions are admitted; otherwise, every missing atom can be
added as an isolated atom. Let \(F_R(X)\) be the resulting eligible atoms and
\(\mathcal B_i(X)\) the complete addition actions for atom \(i\). For
\(m_R(X)>0\), the atom-addition rate is
\begin{equation}
 q_R^{\mathrm B}(a\mid X)=
 \sum_{i\in F_R(X)}\mathbf{1}[a\in\mathcal B_i(X)]
 \frac{m_R(X)}{|F_R(X)|\,|\mathcal B_i(X)|}.
 \label{eq:quotient-rate}
\end{equation}
Each eligible atom receives total rate \(m_R(X)/|F_R(X)|\), distributed
uniformly among its possible attachment atoms. Identical observable actions are merged
by summation. After sampling an atom-addition proposal, the bridge updates the target
correspondence to associate the new node with the chosen target atom and applies the
corresponding fresh-identity permutation to the remaining endpoint. This quotient
operation is not supplied to the rate network.

For either atom-addition distribution, the complete target rate is
\begin{equation}
 q_R(a\mid X)=
 \mathbf{1}[a\in\mathcal C_R(X)]+q_R^{\mathrm B}(a\mid X),
 \label{eq:bridge-target-app}
\end{equation}
where the atom-addition term is zero when no atom is missing. Rates are defined per
unit of \(\tau=-\log(1-t)\); per unit of \(t\), they are multiplied by
\(d\tau/dt=1/(1-t)\).
Let \(D_R(X,\kappa)\) count disagreeing atom and bond tuples, including missing
records, under the current correspondence \(\kappa\), as formalized in
Equation~\ref{eq:deriv-discrepancy}. Every bridge edit reduces this discrepancy.
The target graph is reached after finitely many jumps and is absorbing
for the bridge.

\section{Experimental setup}
\label{app:implementation}

\subsection{Model and training}
\label{app:model-training}

The generator uses a 16-layer graph Transformer with 768-dimensional node
features and 384-dimensional edge features. We implement it in PyTorch with
BF16 mixed-precision training on a single compute node, using distributed
data parallelism for multi-GPU runs. RDKit handles molecular parsing and
reconstruction.

All comparisons use the same graph-Transformer family, complete-edit
representation, and rate-matching objective within each dataset. Models are
trained separately for the two center-information conditions, and the
checkpoint used for testing is selected on validation before the test split is
consulted. The same decoder is used across the center-information conditions.

\paragraph{Bridge states and recovery augmentation.}
Training covers both early and late construction states in transformed time
\(\tau=-\log(1-t)\). When enabled, recovery augmentation perturbs
intermediate states while retaining supervision toward the same reactant
endpoint.

\subsection{Benchmarks and center inputs}
\label{app:benchmark-conventions}

Tables~\ref{tab:full-results} and~\ref{tab:50k-results} use the published
baseline scores and benchmark conventions of RSGPT and RetroDiT
\citep{deng2025rsgpt,wang2026retrodit}.
The USPTO-50K models use the original training split and evaluate
4,944 of 5,007 test products under the ten-new-atom cap.
USPTO-Full evaluation contains 93,289 products.

\paragraph{Reaction-center inputs.}
\label{app:rcset-diagnostics-50k}

Pred. supplies each conditioned trajectory with a predicted center, held fixed
during generation, without reference annotations. Oracle supplies the
reference center information used by the benchmark protocol.

\subsection{Evaluation and decoding}
\label{app:scoring}

Following RetroDiT~\citep{wang2026retrodit}, we compare reactant sets by
atom-and-bond topology and rank distinct decoded candidates by sampling
frequency. Top-$k$ accuracy uses all evaluated products as its denominator.

The accumulated-hazard sampler is specified in Algorithm~\ref{alg:sampling}.
The benchmark uses 100 candidate trajectories and 50 integration intervals per
product, with a decoder budget of at most 128 edits and ten/twenty newly
created atoms on USPTO-50K/Full.

\FloatBarrier
\section{Additional experimental results}
\label{app:additional-results}

\subsection{Training and architecture variants}
\label{app:architecture-variants}

\begin{table}[!htbp]
\centering
\caption{Training and architecture variants. Test Top-$1/3/5/10$
accuracy (\%).}
\label{tab:architecture-ablation-complete}
\tableformat
\begin{tabular}{@{}ll*{8}{r}@{}}
\toprule
Dataset & Variant & \multicolumn{4}{c}{Oracle} & \multicolumn{4}{c}{Pred.} \\
\cmidrule(lr){3-6}\cmidrule(lr){7-10}
 & & 1 & 3 & 5 & 10 & 1 & 3 & 5 & 10 \\
\midrule
\multirow{4}{*}{USPTO-50K}
 & Reference & 72.3 & \textbf{92.9} & \textbf{96.0} & \textbf{97.0} & \textbf{62.4} & 82.8 & \textbf{88.6} & \textbf{91.9} \\
 & Constant addition rate & 72.1 & 89.0 & 91.3 & 93.0 & 61.4 & 80.9 & 85.1 & 88.1 \\
 & Uniform time sampling & 71.6 & 91.5 & 94.5 & 95.8 & 61.2 & 82.3 & 87.4 & 90.5 \\
 & No edge-feature updates & \textbf{73.8} & 92.6 & 95.5 & 96.4 & \textbf{62.4} & \textbf{83.8} & 88.4 & 91.5 \\
\midrule
\multirow{3}{*}{USPTO-Full}
 & Reference & \textbf{68.6} & 80.7 & 82.8 & 84.1 & \textbf{54.3} & \textbf{70.1} & \textbf{74.1} & \textbf{77.1} \\
 & Constant addition rate & 64.9 & 74.9 & 77.2 & 79.3 & 50.0 & 63.6 & 67.1 & 70.1 \\
 & No edge-feature updates & 68.4 & \textbf{80.8} & \textbf{83.1} & \textbf{84.7} & 52.1 & 67.6 & 71.6 & 74.8 \\
\bottomrule
\end{tabular}
\end{table}

\FloatBarrier
\subsection{Supervision ablation}
\label{app:training-controls}

Table~\ref{tab:training-choice-ablation} expands the supervision ablation in
Table~\ref{tab:supervision-main}. The training variants use the same graph
representation, split, and validation protocol. Pred. uses one shared center
predictor; Oracle uses the reference center information.

\begin{table}[!htbp]
\centering
\caption{Supervision comparison on USPTO-50K. Top-$k$ accuracy (\%):
reference on test, training variants on validation. Column maxima are bold.}
\label{tab:training-choice-ablation}
\tableformat
\begin{tabular}{@{}lccrrrr@{}}
\toprule
\multirow{2}{*}{Condition} & \multirow{2}{*}{\shortstack{Partial\\hints}}
 & \multirow{2}{*}{Recovery} & \multicolumn{4}{c}{Top-$k$} \\
\cmidrule(l){4-7}
 & & & 1 & 3 & 5 & 10 \\
\midrule
\multirow{4}{*}{Pred.}
 & \multicolumn{2}{c}{Reference} & 62.4 & 82.8 & \textbf{88.6} & \textbf{91.9} \\
\cmidrule(l){2-7}
 & $\checkmark$ & -- & 62.8 & \textbf{83.6} & 88.0 & 90.4 \\
 & -- & $\checkmark$ & 60.2 & 81.1 & 85.4 & 88.0 \\
 & $\checkmark$ & $\checkmark$ & \textbf{63.3} & 82.8 & 88.2 & 90.2 \\
\midrule
\multirow{4}{*}{Oracle}
 & \multicolumn{2}{c}{Reference} & 72.3 & \textbf{92.9} & \textbf{96.0} & \textbf{97.0} \\
\cmidrule(l){2-7}
 & $\checkmark$ & -- & 72.9 & 91.5 & 93.5 & 94.1 \\
 & -- & $\checkmark$ & \textbf{73.8} & 92.6 & 95.1 & 95.6 \\
 & $\checkmark$ & $\checkmark$ & 73.2 & 91.3 & 94.4 & 95.1 \\
\bottomrule
\end{tabular}
\end{table}

\FloatBarrier
\subsection{Reaction-class analysis}
\label{app:paired-groups}

The reaction-class analysis uses 4,930 paired USPTO-50K products under the same
matching definition and candidate budget as the main evaluation.

\subsubsection{Reaction classes on USPTO-50K}

\begin{table}[!htbp]
\centering
\caption{USPTO-50K accuracy by reaction class. Top-$k$ accuracy (\%) on
4,930 paired products; bold denotes the larger value in each pair. FG denotes
functional-group interconversion.}
\label{tab:reaction-classes}
\tableformat
\begin{tabular}{@{}lrlrrrrrrrr@{}}
\toprule
\multirow{2}{*}{Reaction class} & \multirow{2}{*}{$n$} & \multirow{2}{*}{Method}
 & \multicolumn{4}{c}{Pred.} & \multicolumn{4}{c}{Oracle} \\
\cmidrule(lr){4-7}\cmidrule(lr){8-11}
 & & & Top-1 & Top-3 & Top-5 & Top-10 & Top-1 & Top-3 & Top-5 & Top-10 \\
\midrule
\multirow{2}{*}{Alkyl./aryl.} & \multirow{2}{*}{1,516} & RetroDiT & 65.1 & \textbf{85.4} & 90.6 & \textbf{94.3} & 73.7 & 92.2 & 96.4 & \textbf{98.3} \\
 & & \model{} & \textbf{67.5} & \textbf{85.4} & \textbf{90.8} & 93.6 & \textbf{74.5} & \textbf{94.2} & \textbf{97.6} & 98.0 \\
\addlinespace[1.5pt]
\multirow{2}{*}{Acylation} & \multirow{2}{*}{1,188} & RetroDiT & 74.9 & 91.9 & 94.6 & 96.2 & \textbf{82.2} & \textbf{97.4} & \textbf{98.2} & \textbf{98.7} \\
 & & \model{} & \textbf{75.4} & \textbf{92.8} & \textbf{95.8} & \textbf{97.0} & 81.4 & 97.0 & 98.0 & 98.1 \\
\addlinespace[1.5pt]
\multirow{2}{*}{C--C formation} & \multirow{2}{*}{511} & RetroDiT & \textbf{41.9} & 63.2 & 69.9 & 75.0 & 55.4 & 80.2 & 87.1 & 92.0 \\
 & & \model{} & 40.7 & \textbf{65.9} & \textbf{76.1} & \textbf{82.6} & \textbf{56.2} & \textbf{82.4} & \textbf{88.3} & \textbf{92.4} \\
\addlinespace[1.5pt]
\multirow{2}{*}{Heterocycles} & \multirow{2}{*}{89} & RetroDiT & 6.7 & 7.9 & 7.9 & 7.9 & 69.7 & 79.8 & 82.0 & 84.3 \\
 & & \model{} & \textbf{53.9} & \textbf{71.9} & \textbf{73.0} & \textbf{78.7} & \textbf{77.5} & \textbf{91.0} & \textbf{92.1} & \textbf{95.5} \\
\addlinespace[1.5pt]
\multirow{2}{*}{Protection} & \multirow{2}{*}{67} & RetroDiT & 86.6 & 89.6 & 89.6 & 89.6 & \textbf{95.5} & \textbf{100.0} & \textbf{100.0} & \textbf{100.0} \\
 & & \model{} & \textbf{88.1} & \textbf{92.5} & \textbf{98.5} & \textbf{100.0} & \textbf{95.5} & 97.0 & 97.0 & 97.0 \\
\addlinespace[1.5pt]
\multirow{2}{*}{Deprotection} & \multirow{2}{*}{819} & RetroDiT & \textbf{56.4} & \textbf{84.7} & \textbf{90.6} & \textbf{94.1} & \textbf{59.8} & 88.8 & 93.8 & 96.5 \\
 & & \model{} & 51.2 & 82.2 & 88.8 & 92.7 & 55.2 & \textbf{89.6} & \textbf{94.9} & \textbf{96.8} \\
\addlinespace[1.5pt]
\multirow{2}{*}{Reduction} & \multirow{2}{*}{453} & RetroDiT & 55.2 & 76.2 & 82.1 & 85.7 & 71.3 & 90.3 & 93.4 & 95.1 \\
 & & \model{} & \textbf{61.6} & \textbf{78.4} & \textbf{84.3} & \textbf{89.8} & \textbf{84.8} & \textbf{96.7} & \textbf{97.6} & \textbf{97.8} \\
\addlinespace[1.5pt]
\multirow{2}{*}{Oxidation} & \multirow{2}{*}{82} & RetroDiT & \textbf{69.5} & \textbf{87.8} & 91.5 & 91.5 & \textbf{81.7} & \textbf{93.9} & 95.1 & \textbf{96.3} \\
 & & \model{} & 56.1 & 80.5 & \textbf{92.7} & \textbf{95.1} & \textbf{81.7} & \textbf{93.9} & \textbf{96.3} & \textbf{96.3} \\
\addlinespace[1.5pt]
\multirow{2}{*}{FG interconv.} & \multirow{2}{*}{182} & RetroDiT & \textbf{44.0} & 61.5 & 67.6 & 69.2 & 57.1 & 83.0 & 87.9 & 89.0 \\
 & & \model{} & \textbf{44.0} & \textbf{62.1} & \textbf{70.3} & \textbf{75.3} & \textbf{67.6} & \textbf{87.9} & \textbf{95.1} & \textbf{95.6} \\
\addlinespace[1.5pt]
\multirow{2}{*}{FG addition} & \multirow{2}{*}{23} & RetroDiT & \textbf{82.6} & \textbf{91.3} & \textbf{91.3} & 91.3 & 78.3 & 91.3 & 91.3 & 95.7 \\
 & & \model{} & \textbf{82.6} & \textbf{91.3} & \textbf{91.3} & \textbf{100.0} & \textbf{95.7} & \textbf{100.0} & \textbf{100.0} & \textbf{100.0} \\
\bottomrule
\end{tabular}
\end{table}

\clearpage
\section{Qualitative construction paths}
\label{app:qualitative-examples}

\Needspace{8\baselineskip}
\subsection{Reactant construction examples}
\label{app:construction-cases}

Figure~\ref{fig:full-construction-examples} shows three further USPTO-Full
Oracle construction examples.

\begin{figure}[!htbp]
\centering
\begin{subfigure}[t]{0.32\linewidth}
\centering
\includegraphics[height=115pt,width=\linewidth,keepaspectratio]{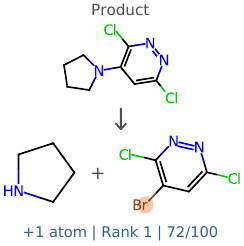}
\caption{Leaving-group recovery.}
\label{fig:full-case-a}
\end{subfigure}\hfill
\begin{subfigure}[t]{0.32\linewidth}
\centering
\includegraphics[height=115pt,width=\linewidth,keepaspectratio]{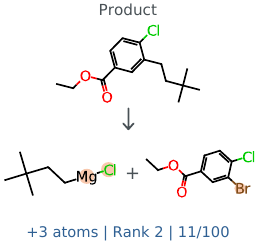}
\caption{Coupling-partner recovery.}
\label{fig:full-case-b}
\end{subfigure}\hfill
\begin{subfigure}[t]{0.32\linewidth}
\centering
\includegraphics[height=115pt,width=\linewidth,keepaspectratio]{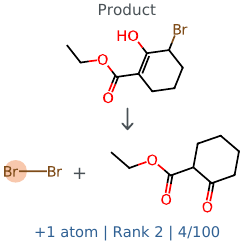}
\caption{Bond-order changes.}
\label{fig:full-case-c}
\end{subfigure}
\caption{Reactant construction on USPTO-Full under Oracle conditioning.
Added atoms are orange; the panels illustrate fragment recovery, coupling
partner recovery, and bond-order revision.}
\label{fig:full-construction-examples}
\end{figure}

Panel (a) combines one added bromine atom with separation of the cyclic
amine from the heteroaromatic scaffold. Panel (b) requires three added atoms
distributed across the two reactants, including an organomagnesium coupling partner.
Panel (c) combines one added atom with ring bond-order changes.

\FloatBarrier
\Needspace{8\baselineskip}
\subsection{Sampled construction paths}
\label{app:sampled-trajectories}

Figures~\ref{fig:trajectory-50k-a}--\ref{fig:trajectory-full} show complete
sampled construction paths under Oracle conditioning. Each example includes
the product and the successive edits; waiting intervals with no graph change
are omitted. These are construction states, not chemical reaction mechanisms.

\paragraph{Chemical reaction classes on USPTO-50K.}
Figures~\ref{fig:trajectory-50k-a} and~\ref{fig:trajectory-50k-b} cover the
ten USPTO-50K reaction classes. The paths range from a single atom addition
to 14 edits, including trajectories that interleave fragment growth with
atom and bond updates.

\begin{figure}[tb!]
\centering
\includegraphics[width=\linewidth]{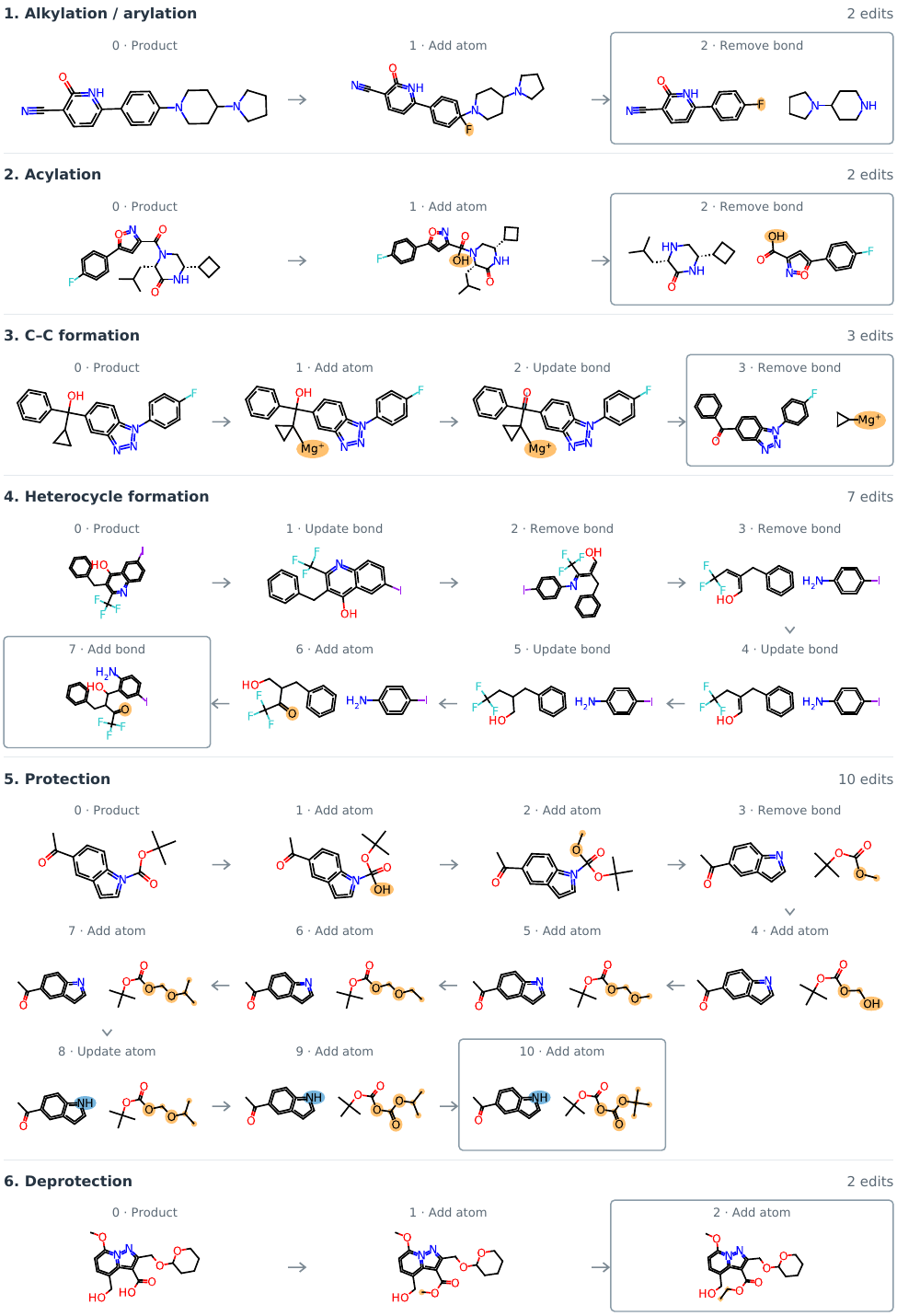}
\caption{Complete trajectories for USPTO-50K classes 1--6.
Orange marks added atoms and their bonds; blue marks changes to inherited
atoms or bonds. Every arrow represents one edit.}
\label{fig:trajectory-50k-a}
\end{figure}

\begin{figure}[tb!]
\centering
\includegraphics[width=\linewidth]{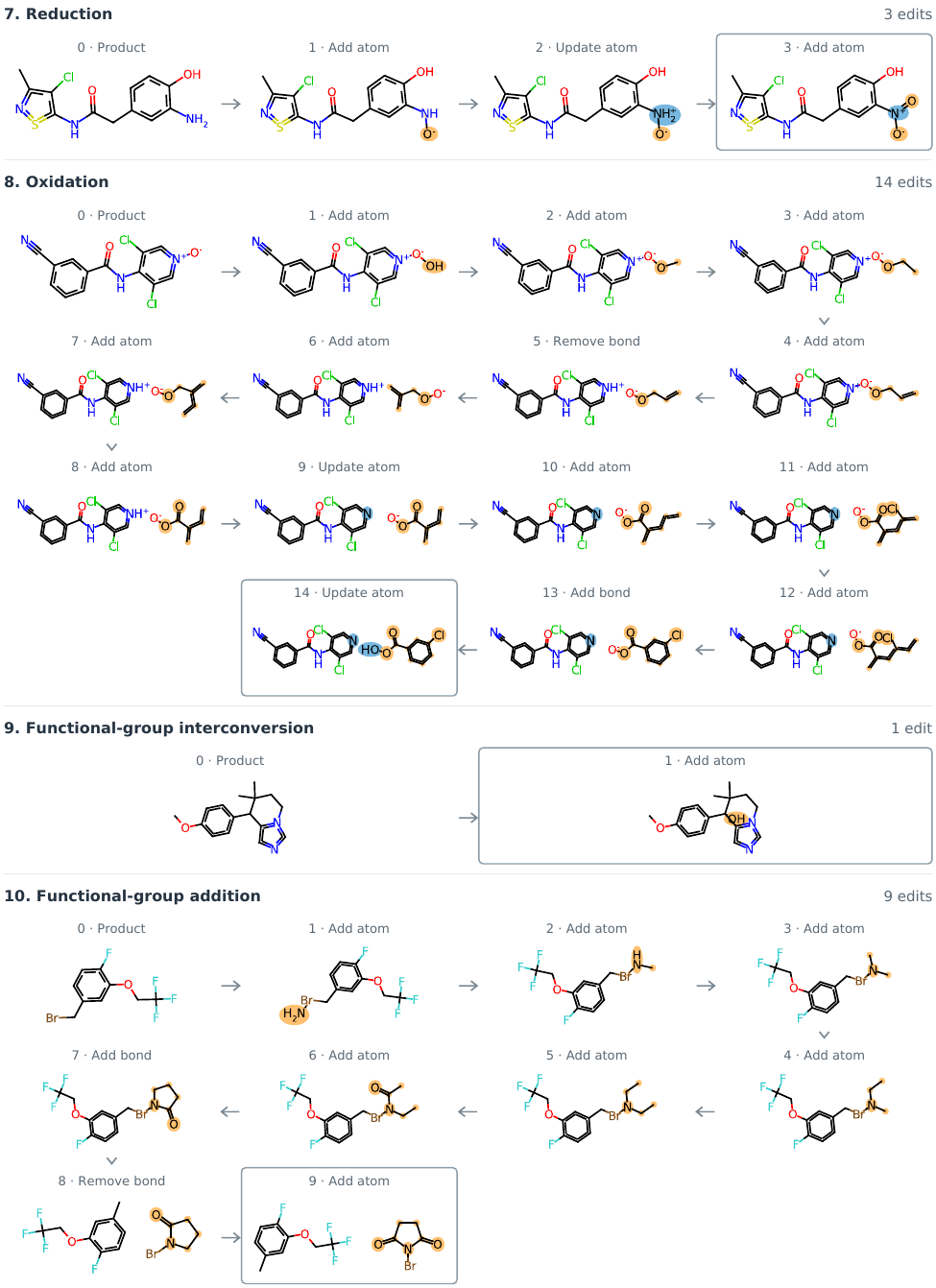}
\caption{Complete trajectories for USPTO-50K classes 7--10.
The oxidation example shows all 14 edits, including ten atom additions.}
\label{fig:trajectory-50k-b}
\end{figure}

\paragraph{Graph-edit families on USPTO-Full.}
The USPTO-Full records used here have no explicit chemical reaction-class
labels. Figure~\ref{fig:trajectory-full} instead groups examples by their
graph changes. The first path starts from a 48-atom product and reaches
a 52-atom reactant set in 14 edits: four atom additions, eight bond-attribute
updates, one atom-attribute update, and one bond removal. The other two
retain the product atom count while updating
atom attributes and bonds, or bond attributes alone.

\begin{figure}[tb!]
\centering
\includegraphics[width=\linewidth]{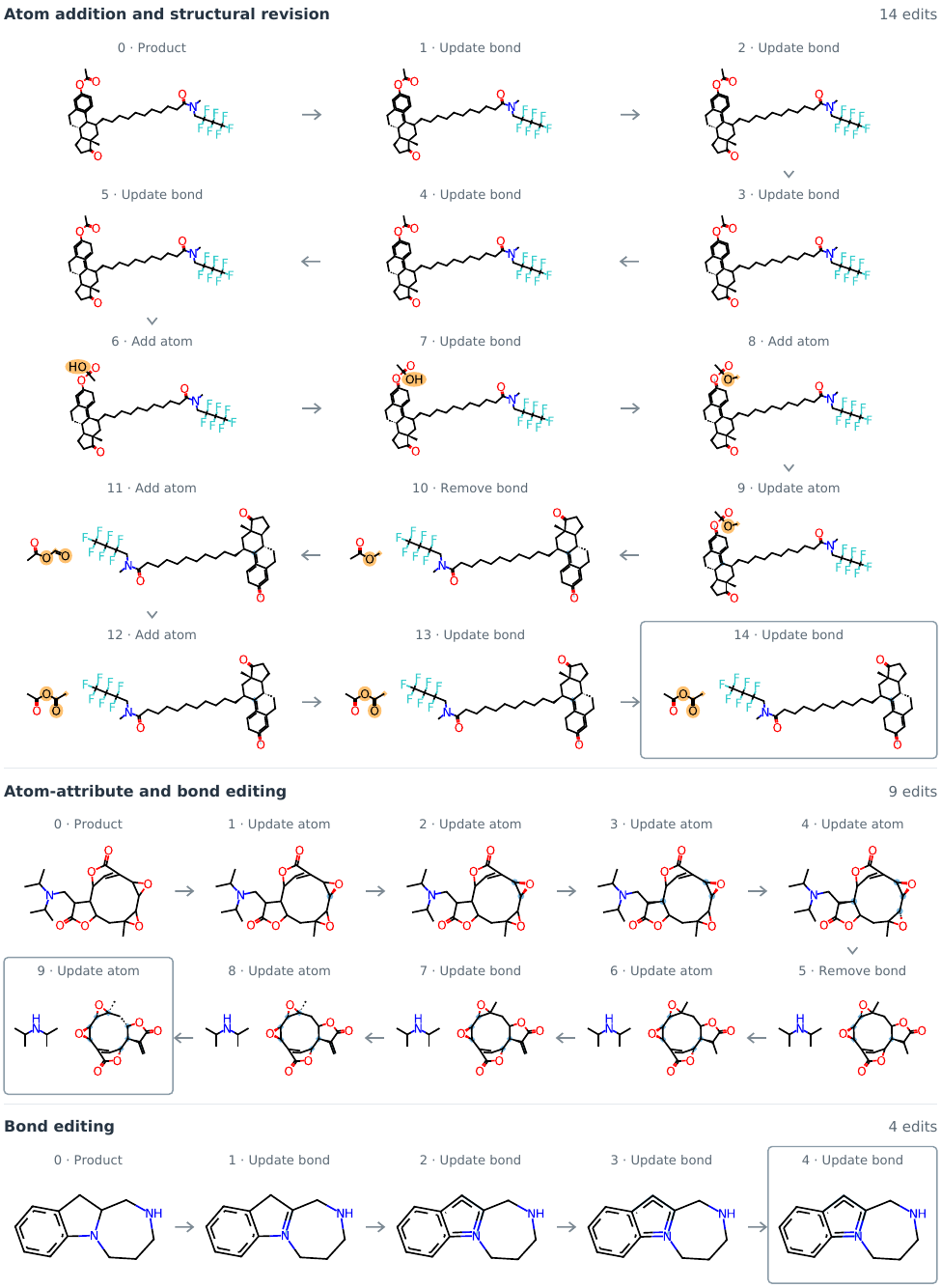}
\caption{Complete trajectories across USPTO-Full graph-edit families.
The three paths contain 14, nine, and four edits; each path is shown on its
recorded inference grid.}
\label{fig:trajectory-full}
\end{figure}

\section{Rate matching and bridge marginals}
\label{app:generator}

The target bridge is Markov on a state that includes more information than the
rate network observes. Besides the current graph \(X_t\), it retains the
recorded reactant graph \(R\) and a correspondence \(\kappa_t\) from current
atoms to target atoms. Write \(U_t=(R,\kappa_t)\) for this hidden information.
The notation \(q_R\) in the main text suppresses the correspondence. Throughout
this appendix, the product and any supplied reaction-center condition are
fixed visible conditioning variables. We suppress the center argument in the
formulas, including conditional expectations. Unadorned rates \(q\) and
\(r\) are per unit of transformed time \(\tau\).

We state the results for a finite representable graph state space, obtained with
finite attribute vocabularies, an atom cap, and a finite set of atom indices.
Targets in the conditional distribution and their bridge edits are represented
in this space; endpoint validity is checked during decoding.

\subsection{From latent proposals to graph transitions}

A complete action specifies an observable edit, whereas a bridge proposal can
also specify which target atom it realizes. Let \(\mathcal H_U(X)\) be the
finite proposal set, \(w_U(h\mid X)\) its nonnegative rates, and
\(o_X(h)\) the complete action obtained by omitting the proposal's hidden
target assignment. The action rate is the pushforward of the proposal rates:
\begin{equation}
 q_U(a\mid X)=
 \sum_{h\in\mathcal H_U(X):\,o_X(h)=a}w_U(h\mid X).
 \label{eq:deriv-action-aggregation}
\end{equation}
For quotient atom addition, different missing target atoms can therefore
contribute to the same attached atom-addition action. If a
simulator samples the merged action first, it recovers a compatible latent
proposal with probabilities \(w_U(h\mid X)/q_U(a\mid X)\). This second draw
updates the correspondence used by subsequent bridge steps.

Actions can be aggregated once more when several complete edits have the same
successor. For \(Y\ne X\), the graph transition rate and its diagonal entry are
\begin{equation}
 Q_U^\tau(X,Y)=\sum_{a:T_aX=Y}q_U(a\mid X),\qquad
 Q_U^\tau(X,X)=-\sum_{Y\ne X}Q_U^\tau(X,Y).
 \label{eq:deriv-successor-rates}
\end{equation}
We take actions to be genuine edits, so \(T_aX\ne X\). Null actions, if present
in a proposal representation, contribute nothing to a graph generator and can
be omitted. The distinction between proposals, actions, and successors avoids
treating an unobserved target assignment as an extra graph transition.

Replacing target rates by learned rates gives the generator acting on any
real-valued function \(f\) of the graph:
\begin{equation}
 (\mathcal Q_\theta^\tau f)(X;P)=
 \sum_{a\in\mathcal A(X)}r_\theta(a\mid X,P,t(\tau))
 \bigl[f(T_aX)-f(X)\bigr].
 \label{eq:event-generator}
\end{equation}
The action distribution is useful computationally, but the sums in
Equation~\ref{eq:deriv-successor-rates} determine the evolution of graph
probabilities. Learning all action rates is sufficient to learn these sums.

\subsection{A decreasing discrepancy and finite absorption}

Fix a target \(R\). On an unperturbed bridge, every current atom corresponds
to a distinct target atom, with product-atom correspondences fixed. Existing assignments are
never revised. An atom addition extends \(\kappa\) to one previously missing target
atom. Equivalently, each realized path can be compared with its target using
one consistent set of target atom indices.

These shared indices make discrepancies involving uncreated atoms
well-defined. Let \(\nu_X^\kappa(i)\) be the current atom tuple at target
index \(i\), or a missing-node symbol if that atom does not yet exist.
Let \(\beta_X^\kappa(i,j)\) be the current bond tuple, with a no-bond symbol
when a bond or either endpoint is absent. Define
\begin{equation}
 D_R(X,\kappa)=
 \sum_{i\in V_R}\mathbf 1[\nu_X^\kappa(i)\ne\nu_R(i)]
 +\sum_{\{i,j\}\subset V_R}
     \mathbf 1[\beta_X^\kappa(i,j)\ne\beta_R(i,j)].
 \label{eq:deriv-discrepancy}
\end{equation}
The second sum counts missing target bonds even before their endpoints are
created. Counting only discrepancies among currently existing atoms would not
give the required monotonicity: an atom addition could expose additional missing bonds.

\begin{proposition}[Absorption of a target bridge]
\label{prop:deriv-absorption}
Suppose the target contains all product atoms, agrees on their immutable
attributes, and the atom-addition and other corrective actions described in
Appendix~\ref{app:events} are admissible. Starting at \(P\), every bridge jump
strictly decreases \(D_R\). The bridge reaches a representative of \([R]_P\)
after at most \(D_R(P,\kappa_0)\) jumps and in finite transformed time almost
surely. That terminal state is absorbing for the target bridge.
\end{proposition}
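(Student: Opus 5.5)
The plan is to prove the three claims in sequence: strict decrease of \(D_R\) along every jump, finite absorption in transformed time, and the absorbing property of the terminal state. Throughout I work on the unperturbed bridge with its extended state \((X,\kappa)\). Here \(\kappa\) is injective, fixes product atoms, and is only ever extended, so the target indices in Equation~\ref{eq:deriv-discrepancy} are consistent along a path.

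\textbf{Strict decrease.} I will check each corrective edit type in \(\mathcal C_R(X)\) from Appendix~\ref{app:events}, together with atom additions, and show that it changes exactly one term of \(D_R\) from mismatch to match while leaving every other term unchanged.
\begin{itemize}
\item An atom-attribute correction writes the full target tuple at one atom. This fixes that \(\nu\)-term and touches no bond term.
\item Bond deletion, bond addition, and bond-attribute correction each write the target value of \(\beta\) for one pair whose endpoints both exist.
\item An attached atom addition realizes a missing target atom \(i\), with \(\kappa\) extended to \(i\). It writes the full target tuple \(\nu_R(i)\), and its first bond to an existing target neighbour \(j\) writes \(\beta_R(i,j)\). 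That is a decrease of two. All other pairs involving \(i\) stay at the no-bond symbol, exactly as they were before \(i\) existed, so they do not change. This is where it matters that missing bonds to uncreated atoms are counted in advance.
\item An isolated addition fixes only the \(\nu\)-term of \(i\).
\end{itemize}
Hence every jump satisfies \(D_R\) decreasing by at least one. Since \(D_R\ge 0\), the number of jumps is at most \(D_R(P,\kappa_0)\).

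\textbf{Reaching the target.} Next I show that \(\Lambda_R(X)>0\) whenever \(D_R(X,\kappa)>0\); that is, a positive discrepancy always leaves some admissible edit with positive rate. I split into two cases.
\begin{itemize}
\item If some target atom is missing, then \(m_R>0\) and \(F_R(X)\) is nonempty. Either an attached proposal exists, or every missing atom is eligible as an isolated addition. By Equation~\ref{eq:quotient-rate}, the total addition rate is \(m_R>0\).
\item Otherwise all atoms exist. Any nonzero discrepancy is then an atom or bond mismatch between existing atoms, which produces a unit-rate correction.
\end{itemize}
Admissibility of these actions is part of the hypothesis, and product atoms with their immutable attributes never need correction because the target agrees on them. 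Conversely, \(D_R=0\) means \(X\) equals \(R\) relabelled by \(\kappa\), i.e.\ a representative of \([R]_P\).

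\textbf{Finite time and absorption.} The state space is finite and each nonterminal state has finite positive total rate \(\Lambda_R\), so each holding time is exponential with finite mean. The chain takes at most \(D_R(P,\kappa_0)\) jumps, so the absorption time is a finite sum of almost surely finite waiting times. At \(D_R=0\) there are no missing atoms and no mismatches, so \(\mathcal C_R\) is empty, \(m_R=0\), and \(\Lambda_R=0\). The state therefore has no outgoing transitions and is absorbing.

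\textbf{Main obstacle.} The step I expect to need the most care is the attached addition. I must verify two things. First, the first-bond type in the chosen action equals the target bond \(\beta_R(i,j)\), which holds because \(\mathcal B_i(X)\) only proposes target neighbours with target bond tuples. Second, the fresh-identity permutation applied after a quotient merge does not alter \(D_R\) for other atoms. I would handle the second point by noting that the permutation only relabels still-missing target atoms among attribute-identical indices, so it preserves \(D_R\).
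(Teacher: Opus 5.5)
Your proposal is correct and follows the same route as the paper's proof. Each bridge jump writes one disagreeing record to its target value, plus the first target bond for an attached addition, without changing any correct record; the total rate \(\Lambda_R=m_R+|\mathcal C_R|\) is at least one before absorption, so at most \(D_R(P,\kappa_0)\) exponential holding times are needed, and both rate terms vanish at the terminal representative. Your case-by-case check and no-stuck-state argument just make explicit what the paper states tersely; for the quotient relabeling, \(D_R\) is invariant under any consistent relabeling of target indices (existing assignments are never revised), so the permuted missing atoms need not be attribute-identical.
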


\begin{proof}
A corrective edit writes a disagreeing record to its target value; an attached
atom addition also writes its incident target bond. No correct record is changed,
so the nonnegative integer \(D_R\) decreases at every jump. Before absorption,
the bridge rate \(\Lambda_U(X)=m_R(X)+|\mathcal C_R(X)|\) is finite and at
least one. Hence at most \(D_R(P,\kappa_0)\) exponential holding times are
needed, their sum is finite almost surely, and both rate terms vanish at the
terminal representative.
\end{proof}

The argument permits different construction orders, including the two in
Figure~\ref{fig:edit-paths}, without requiring intermediate graphs to satisfy
molecular valence rules.

\subsection{Generalized KL separates intensity from edit choice}
\label{app:rate-decomposition}

For finite nonnegative action-rate measures \(q\) and \(r\), define
\begin{equation}
 D_{\rm GKL}(q\Vert r)=
 \sum_a\left[q(a)\log\frac{q(a)}{r(a)}-q(a)+r(a)\right].
 \label{eq:deriv-rate-divergence}
\end{equation}
We use the conventions \(0\log(0/r)=0\) and infinite divergence when
\(q(a)>0=r(a)\). Let \(\Lambda=\sum_aq(a)\) and
\(\lambda=\sum_ar(a)\). When both are positive, set
\(\nu=q/\Lambda\) and \(\pi=r/\lambda\). Substituting these factorizations
and using \(\sum_a\nu(a)=1\) gives
\begin{equation}
 D_{\rm GKL}(q\Vert r)=
 \underbrace{\Lambda\log\frac{\Lambda}{\lambda}-\Lambda+\lambda}
 _{\text{intensity divergence}}
 +\Lambda D_{\rm KL}(\nu\Vert\pi).
 \label{eq:deriv-kl-decomposition}
\end{equation}
Thus action-choice errors are weighted by the target edit intensity. Removing
terms independent of the model leaves
\(\lambda-\Lambda\log\lambda-\sum_aq(a)\log\pi(a)\), exactly the loss in
Equation~\ref{eq:gkl}. When \(\Lambda=0\), the divergence is simply
\(\lambda\): an absorbed target supplies intensity supervision but no preferred
next action. The optimum lies at zero intensity, a boundary approached rather
than attained by a finite softplus output.

Scaling both rate measures by \(c>0\) scales their divergence by \(c\).
Consequently using rates per unit of \(t\) would multiply the statewise loss
by \(1/(1-t)\). The manuscript instead matches transformed-time rates.
Different positive time-sampling weights preserve the unrestricted
statewise optimum, but need not produce the same finite-capacity fit.

\subsection{Optimal prediction with a hidden target and correspondence}

Let \(Z=(X_t,P,t)\) denote the visible conditioning, including any supplied
center as above. Define \(\bar q_Z(a)=\mathbb E[q_{U_t}(a\mid X_t)\mid Z]\).
All latent alternatives are expressed on the same visible action set, with
zero rate assigned to actions absent from a particular target's support.

\begin{proposition}[Conditional-mean rate]
\label{prop:deriv-conditional-rate}
For a finite action set and finite conditional expectations of \(q(a)\) and
\(q(a)\log q(a)\), the minimizer
of conditional expected generalized KL over nonnegative rate measures is
\(r^*(a\mid Z)=\bar q_Z(a)\).
\end{proposition}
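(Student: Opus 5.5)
The argument is pointwise in the action index. By the definition of \(D_{\rm GKL}\) in Equation~\ref{eq:deriv-rate-divergence}, the conditional expected loss is
\[
\mathbb E\bigl[D_{\rm GKL}(q_{U_t}\Vert r)\mid Z\bigr]
=\sum_a \mathbb E\Bigl[q_{U_t}(a)\log q_{U_t}(a)-q_{U_t}(a)\Bigm| Z\Bigr]
+\sum_a\Bigl[r(a)-\bar q_Z(a)\log r(a)\Bigr].
\]
This split uses linearity of conditional expectation over the finite action set. It is legitimate because the hypotheses give finite conditional expectations of \(q(a)\) and \(q(a)\log q(a)\), with \(0\log 0=0\). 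The first sum does not depend on \(r\) and is finite, so the problem reduces to minimizing \(\phi_a(r(a))=r(a)-\bar q_Z(a)\log r(a)\) separately for each \(a\) over \(r(a)\ge 0\).

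For each action there are two cases.

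If \(\bar q_Z(a)>0\), then \(\phi_a\) is strictly convex on \((0,\infty)\). Its derivative \(1-\bar q_Z(a)/r\) vanishes only at \(r=\bar q_Z(a)\). The value \(r=0\) is excluded, since there \(-\bar q_Z(a)\log 0=+\infty\). This matches the convention that the divergence is infinite whenever \(q>0=r\) on a set of positive conditional probability. So the unique minimizer is \(\bar q_Z(a)\).

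If \(\bar q_Z(a)=0\), then \(q_{U_t}(a)=0\) almost surely given \(Z\), because \(q\ge0\). The term reduces to \(r(a)\), which is minimized uniquely at \(0=\bar q_Z(a)\).

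Combining the two cases, \(r^*=\bar q_Z\) is the unique minimizer among nonnegative rate measures.

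Two points need explicit care. First, the interchange of expectation with the \(\log r\) term: it is harmless because \(\log r(a)\) is \(Z\)-measurable, so \(\mathbb E[q\log r\mid Z]=\bar q_Z\log r\). Second, the \(+\infty\) case: I will state that any \(r\) with \(r(a)=0<\bar q_Z(a)\) has infinite expected loss, so it never competes. As a check, the minimal value equals \(\sum_a\mathbb E[q\log q\mid Z]-\sum_a\bar q_Z\log\bar q_Z\), which is finite and nonnegative by Jensen's inequality applied to the convex map \(x\mapsto x\log x\). Equivalently, the excess loss of any \(r\) is exactly \(D_{\rm GKL}(\bar q_Z\Vert r)\ge0\).

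I also plan to note consistency with the factorized parameterization of Equation~\ref{eq:rate-factorization}. By Equation~\ref{eq:deriv-kl-decomposition}, the optimum has \(\lambda^*=\sum_a\bar q_Z(a)\) and \(\pi^*=\bar q_Z/\lambda^*\) when \(\lambda^*>0\). When \(\lambda^*=0\), the optimum is the zero-intensity boundary discussed after Equation~\ref{eq:deriv-kl-decomposition}. Finally, because the hidden correspondences enter only through the pushforward in Equation~\ref{eq:deriv-action-aggregation}, the mean \(\bar q_Z\) automatically averages over endpoints and atom correspondences, which is what Equation~\ref{eq:main-marginal-rate} asserts.
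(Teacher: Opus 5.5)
Your proof is correct and follows essentially the paper's own argument: reduce to the per-action risk \(r(a)-\bar q_Z(a)\log r(a)\), split on whether \(\bar q_Z(a)\) is positive or zero, and observe that the excess loss is exactly \(D_{\rm GKL}(\bar q_Z\Vert r)\), as in Equation~\ref{eq:deriv-excess-risk}. Your explicit handling of the \(r(a)=0<\bar q_Z(a)\) boundary, the measurability remark, and the Jensen check on the minimal value are sound refinements rather than a different route.
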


\begin{proof}
For each action, the model-dependent conditional risk is
\(r(a)-\bar q_Z(a)\log r(a)\). If \(\bar q_Z(a)>0\), its derivative is
\(1-\bar q_Z(a)/r(a)\), with a unique minimum at \(r(a)=\bar q_Z(a)\).
If the mean is zero, the risk is \(r(a)\), minimized at zero. Equivalently,
direct expansion gives the excess-risk identity
\begin{equation}
 \mathbb E[D_{\rm GKL}(q_{U_t}\Vert r)\mid Z]
 =\mathbb E[D_{\rm GKL}(q_{U_t}\Vert\bar q_Z)\mid Z]
  +D_{\rm GKL}(\bar q_Z\Vert r).
 \label{eq:deriv-excess-risk}
\end{equation}
\end{proof}

The first term in Equation~\ref{eq:deriv-excess-risk} reflects uncertainty
about the hidden target and correspondence even when the visible-state rate
is predicted optimally. When \(\bar\Lambda_Z=\mathbb E[\Lambda_{U_t}\mid Z]>0\),
the optimal factorization is
\begin{equation}
 \lambda^*(Z)=\bar\Lambda_Z,\qquad
 \pi^*(a\mid Z)=
 \frac{\mathbb E[q_{U_t}(a\mid X_t)\mid Z]}{\bar\Lambda_Z}.
 \label{eq:deriv-optimal-factorization}
\end{equation}
This is an intensity-weighted average of target action distributions, not
their unweighted average.

\subsection{Consistency of one-time graph marginals}
\label{app:marginal-consistency}

The generator-matching argument \citep{holderrieth2024generatormatching}
applies to the visible projection even when that projection is not itself
Markov. Let \(p_\tau(x)\) be the conditional bridge law given the visible
inputs, with locally integrable rates on the finite state space.

\begin{proposition}[Graph-edit marginal consistency]
\label{prop:deriv-marginals}
\label{prop:main-marginal}
Fix the product and any supplied center condition. Assume the unperturbed
bridges have a finite admissible state space containing their targets,
consistently extended atom correspondences, and locally integrable rates
defining non-explosive processes. Over unrestricted nonnegative rates,
the population optimum of Equation~\ref{eq:gkl}, per unit \(\tau\), is
the conditional mean in Equation~\ref{eq:main-marginal-rate}.
The expectation includes hidden atom correspondences. The time-inhomogeneous
Markov process initialized at \(P\) with these rates has marginal law
\(p_\tau\) at every finite \(\tau\).
\end{proposition}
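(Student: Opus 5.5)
The argument has two parts. The optimization claim reduces to Proposition~\ref{prop:deriv-conditional-rate}; the marginal claim is a Kolmogorov forward-equation argument on the finite graph space, in the spirit of generator matching \citep{holderrieth2024generatormatching}.

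\textbf{Step 1: the optimum is the conditional mean.} Write the population objective as the expectation over \(\tau\) and the bridge state of Equation~\ref{eq:gkl}. By Appendix~\ref{app:rate-decomposition}, Equation~\ref{eq:gkl} equals \(D_{\rm GKL}(q_{U_t}\Vert r)\) up to a model-independent term. The rates are unrestricted, so the objective can be minimized separately for each visible \(Z=(X_t,P,t)\). The moment conditions of Proposition~\ref{prop:deriv-conditional-rate} hold because all rates are bounded integers times uniform weights on a finite space. That proposition then gives \(r^*(a\mid Z)=\bar q_Z(a)\). The expectation runs over \(U_t=(R,\kappa_t)\), so it includes the hidden correspondences, which is Equation~\ref{eq:main-marginal-rate}. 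I would also note that the outer time weighting does not change the statewise optimum wherever the visible state has positive probability. Off-support states are irrelevant for the marginals.

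\textbf{Step 2: the forward equation for the joint process.} Work with the joint process \((X_\tau,U_\tau)\). By Appendix~\ref{app:generator} it is Markov on a finite space. Its generator \(\mathcal L\) acts on the graph component through the proposal rates \(w_U\), and updates \(\kappa\) as part of each jump. Consistency of the correspondence extension makes the joint transition well defined. Because the rates are locally integrable and the process is non-explosive, the joint law \(\mu_\tau\) is absolutely continuous in \(\tau\). It satisfies the Kolmogorov forward equation in integrated form.

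Now test against functions \(f(x)\) of the graph alone. For such \(f\), the joint generator gives
\[
\frac{d}{d\tau}\mathbb E f(X_\tau)=\mathbb E\Bigl[\sum_a q_{U_\tau}(a\mid X_\tau)\bigl(f(T_aX_\tau)-f(X_\tau)\bigr)\Bigr].
\]
This step uses the pushforward in Equation~\ref{eq:deriv-action-aggregation}. Conditioning inside the expectation on \(X_\tau=x\) replaces \(q_U\) by \(r^*\), giving
\[
\frac{d}{d\tau}\sum_x p_\tau(x)f(x)=\sum_x p_\tau(x)(\mathcal Q_{*}^\tau f)(x),
\]
where \(\mathcal Q_*^\tau\) is the generator of Equation~\ref{eq:event-generator} with rates \(r^*\). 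Hence \(p_\tau\) solves the forward equation of the visible Markov generator built from \(r^*\), starting from \(p_0=\delta_P\).

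\textbf{Step 3: uniqueness.} The visible process with rates \(r^*\) lives on a finite space. Its rates are locally integrable, because \(r^*\le\mathbb E[\Lambda_U\mid Z]\), which is bounded by the maximal discrepancy. The forward equation is therefore a linear ODE \(\dot p=p\,Q_*^\tau\) with locally integrable coefficients. By Carathéodory/Grönwall it has a unique absolutely continuous solution, and this solution is the law of the process initialized at \(P\). Matching the two solutions gives the claim at every finite \(\tau\).

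\textbf{Main obstacle.} I expect the hardest part to be measurability and regularity of \(r^*(\cdot\mid x,\tau)\) where \(p_\tau(x)=0\). I would set \(r^*:=0\) there and check that these states contribute nothing to the forward equation. I would also need integrability near \(\tau\to\infty\), which only matters for finite \(\tau\) and is therefore harmless. A secondary point is the conditioning step in Step 2. It needs the expectation to factor as \(\sum_x p_\tau(x)\,\mathbb E[\cdot\mid X_\tau=x]\). This holds because the bracket depends on \(U\) only through \(q_U\), which is linear, so conditional expectation passes inside.
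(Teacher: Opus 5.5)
Your proposal is correct and takes essentially the paper's route. The optimum comes from Proposition~\ref{prop:deriv-conditional-rate}; conditioning the bridge generator on the visible graph replaces \(q_{U_t}\) by \(r^*\) in the forward equation; and uniqueness of the finite-state forward equation with initial condition \(\delta_P\) gives equal one-time marginals. You also make explicit points the paper's short proof leaves implicit: the joint Markov process on \((X,U)\) with pushed-forward proposal rates, the bound on \(r^*\) via the discrepancy \(D_R\), and the handling of states with \(p_\tau(x)=0\).
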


\begin{proof}
Proposition~\ref{prop:deriv-conditional-rate} gives the conditional-mean
optimum in Equation~\ref{eq:main-marginal-rate}. For any graph function
\(f\), conditioning the bridge generator on the visible
state replaces \(q_{U_t}\) by its conditional mean \(r^*\). The conditional
bridge law and the Markov process with rates \(r^*\) therefore satisfy the same
finite forward equation and initial condition; uniqueness gives equal one-time
marginals.
\end{proof}

Under Proposition~\ref{prop:deriv-absorption}, each bridge eventually reaches
its own target representative. For targets accepted by the reconstruction
check in Appendix~\ref{app:representation}, decoding maps the limiting
marginal to the recorded-target distribution used to construct the bridges.
The learned sampler approximates this population process with a fitted rate
network, a finite horizon, and numerical integration.

\subsection{Perturbed training states and numerical generation}

State perturbations augment the training population with recoverable graph
errors. If \(\widetilde U_t\) retains the target and its correspondence with a
perturbed graph \(\widetilde X_t\), recomputing corrective rates gives
\begin{equation}
 \widetilde r^*(a\mid\widetilde X_t,P,t)=
 \mathbb E_{\rm aug}[q_{\widetilde U_t}(a\mid\widetilde X_t)
                    \mid\widetilde X_t,P,t],
 \label{eq:deriv-augmented-rate}
\end{equation}
which supervises recovery without assigning a unique correction order.

\section{Algorithms and state evolution}
\label{app:alg-details}

\subsection{Graph state and atom indices}
\label{app:alg-state}

The initial state is \(P\). A new atom receives the largest current atom index
plus one; subsequent edits can revise its attributes or incident bonds.
Surviving atoms retain their indices after a deletion.

Attribute tuples and complete edits follow Appendices~\ref{app:representation}
and \ref{app:events}. Bond absence is represented by a missing edge.

The state update can delete an isolated or leaf generated node; the rate network
restricts this action to the most recently added generated node (the highest
allocated atom index among generated atoms).
A leaf deletion removes its incident bond. Atom-addition masks enforce the
new-atom cap.

\subsection{Constructing bridge states and sparse rate targets}
\label{app:alg-training}

During a quotient bridge, the endpoint indices assigned to uncreated atoms
can change as atoms are added.
The endpoint is relabelled to extend the correspondence with the newly
created atom, while existing state identities and assignments remain unchanged.
Denote the endpoint together with this correspondence by \(U\). The model never
receives \(U\) or a fixed slot assignment.

The bridge first forms weighted proposals \((a,w,i)\), where \(a\) is a
complete observable edit and \(i\) is the endpoint atom represented by an
atom addition, or an empty tag for other corrections. Proposal weights follow
Appendix~\ref{app:events}. An attached atom addition supplies only one
first bond. Other endpoint bonds incident to the new node become eligible
bond-addition corrections once both of their atoms exist. In perturbed states,
corrections can also remove a superfluous newest isolated or leaf atom.

Distinct hidden endpoint atoms can propose the same observable action. Bridge
simulation samples the tagged proposals, retaining the selected endpoint
identity for the next correspondence update. The training target instead
merges them:
\begin{equation}
  q_U(a\mid X)=\sum_{(a',w,i):\,a'=a}w.
  \label{eq:alg-merged-target}
\end{equation}
Thus the learner is not asked to distinguish endpoint assignments that yield
the same current edit. The total rate is unchanged by this summation.
Algorithm~\ref{alg:training} uses exponential waiting times to simulate the
bridge \citep{gillespie1977simulation}.

\begin{algorithm}[!ht]
\caption{Bridge supervision for one training pair}
\label{alg:training}
\small
\begin{algorithmic}[1]
\Require Product $P$, reactants $R$, rate network $\theta$
\Ensure Rate-matching loss for one sampled state
\State Sample $t_\star$ from the training time distribution; $\tau_\star\gets-\log(1-t_\star)$
\State $X\gets P$, $\tau\gets0$; initialize endpoint correspondence $U$ from $(P,R)$
\While{$\tau<\tau_\star$}
  \State $\mathcal Q\gets\Call{TaggedCorrections}{X,U}$; $\Lambda\gets\sum_{(a,w,i)\in\mathcal Q}w$
  \If{$\Lambda=0$} \State \textbf{break} \EndIf
  \State Draw $\Delta\tau\sim\operatorname{Exp}(\Lambda)$
  \If{$\tau+\Delta\tau>\tau_\star$} \State \textbf{break} \EndIf
  \State Draw $(a,w,i)\in\mathcal Q$ with probability $w/\Lambda$
  \State $X\gets T_aX$; $\tau\gets\tau+\Delta\tau$
  \State Update $U$ using $i$ if $a$ adds an atom
\EndWhile
\State Optionally apply recoverable perturbations to $X$
\State Recompute $\mathcal Q$; merge identical actions into $q_U$ using Eq.~\ref{eq:alg-merged-target}
\State $(\lambda_\theta,\pi_\theta)\gets\Call{RateNetwork}{P,X,t_\star}$
\State \Return $\mathcal L_{\mathrm{rate}}(\lambda_\theta,\pi_\theta;q_U)$ from Eq.~\ref{eq:gkl}
\end{algorithmic}
\end{algorithm}

Optimization averages the loss in Algorithm~\ref{alg:training} over the minibatch.

\subsection{Factorization of edit probabilities}
\label{app:alg-factorization}

The probability of a complete edit is evaluated through a hierarchy of normalized
conditional distributions. Let \(Z=(P,X,t)\) denote the visible context.
The graph readout predicts total intensity and edit-type probabilities.
Type-specific distributions then select locations and attributes. For an
attached atom addition at existing atom \(j\), the model jointly selects \(j\)
and the new element \(z\). The remaining atom attributes are generated
autoregressively in tuple order. Writing \(f_{\mathrm A}\) for this
edit type as in the main text, the factorization is
\begin{align}
 \pi_\theta(f_{\mathrm A},j,\mathbf m,\mathbf e\mid Z)
 &=\pi_F(f_{\mathrm A}\mid Z)
       \pi_{J,z}(j,z\mid f_{\mathrm A},Z)
 \nonumber\\
 &\quad\times\prod_{k=2}^{6}\pi_k(m_k\mid m_{<k},j,Z)
       \pi_E(\mathbf e\mid\mathbf m,j,Z).
 \label{eq:alg-attached-factorization}
\end{align}
\begin{samepage}
The new atom's index is determined by the current state and adds no
categorical choice. The autoregressive atom decoder updates its hidden state
between attributes, so later attributes depend on the earlier choices.
For attached atom additions, the first-bond prediction is conditioned on all
attributes of the new atom. The bond decoder jointly predicts bond type and
stereochemistry. This factorization couples attachment, atom attributes,
and first-bond attributes within one complete edit.\par
\end{samepage}

Isolated atom additions use a graph-level element distribution followed by the same
sequence of attribute predictions, without a bond prediction. They introduce only
one atom; subsequent attached atom additions or bond edits determine the component's
eventual structure. Bond additions select
an absent unordered pair and then jointly predict its bond attributes. Bond-attribute updates select an
existing bond and mask its current tuple out of the replacement distribution.
For updates that keep the element fixed, a nonempty mask selects the atom
attributes to change. Replacement values exclude the current values, while untouched
fields are copied into the complete tuple. Deletions specify locations without new attributes.

Masks are applied before the corresponding normalization. An edit type with no
eligible location is excluded from the type distribution. During atom addition,
the allowed values of each attribute can depend on earlier choices, and
the allowed bond attributes can depend on the new atom tuple. Training computes
the log probability of the complete target action by adding these conditional
log probabilities. Generation samples actions through the same hierarchy.

\subsection{Generation by accumulated hazard}
\label{app:alg-sampling}

The numerical sampler partitions transformed time uniformly up to
\(\tau_{\mathrm{end}}=-\log(1-t_{\mathrm{end}})\). The network still
receives \(t=1-e^{-\tau}\). At a rate evaluation, it freezes both total
intensity \(\widehat\lambda\) and action distribution \(\widehat\pi\)
until the next grid boundary or a graph edit. If the next boundary is
\(\beta\), the integrated hazard over an unchanged interval is approximated
by
\begin{equation}
 \int_\tau^\beta
 \lambda_\theta(X,P,1-e^{-s})\,ds
 \;\approx\;\widehat\lambda(\beta-\tau).
 \label{eq:alg-interval-hazard}
\end{equation}
An exponential threshold is consumed across intervals. Crossing a grid
boundary changes the rate approximation, but does not restart the event
clock. When the threshold is crossed within an interval, the action is sampled
from the same frozen distribution used for that event's waiting time.

\begin{algorithm}[!ht]
\caption{Generation with accumulated hazard}
\label{alg:sampling}
\small
\begin{algorithmic}[1]
\Require Product $P$, rates $\theta$, grid ending at $\tau_{\mathrm{end}}$, caps $N_{\mathrm{edit}},N_{\mathrm{new}}$
\Ensure Terminal graph $X$ or a budget-limited terminal state
\State $X\gets P$, $\tau\gets0$, $n\gets0$; draw $H\sim\operatorname{Exp}(1)$
\While{$\tau<\tau_{\mathrm{end}}$}
  \State $\beta\gets$ next grid boundary after $\tau$
  \State $(\widehat\lambda,\widehat\pi)\gets\Call{RateNetwork}{P,X,1-e^{-\tau}}$ with current masks and $N_{\mathrm{new}}$
  \State $h\gets\widehat\lambda(\beta-\tau)$
  \If{$\widehat\lambda=0$ \textbf{or} $H>h$}
    \State $H\gets H-h$; $\tau\gets\beta$ \Comment{Keep the event clock}
  \Else
    \If{$n=N_{\mathrm{edit}}$} \State \Return \textsc{BudgetLimit} \EndIf
    \State $\tau\gets\tau+H/\widehat\lambda$; draw $a\sim\widehat\pi$
    \State $X\gets T_aX$; $n\gets n+1$
    \State Draw $H\sim\operatorname{Exp}(1)$ \Comment{New clock after the edit}
  \EndIf
\EndWhile
\State \Return $X$
\end{algorithmic}
\end{algorithm}

Algorithm~\ref{alg:sampling} re-evaluates the rates after each edit and each
grid crossing. The grid controls integration accuracy, not the edit count.

Updating the graph immediately matters for growth: a newly created atom can
provide an attachment site, form another bond, or undergo an attribute update
before the next grid boundary. Conversely, several boundaries can pass without
an edit. The new-atom cap disables atom addition while retaining other admissible
edit types. If the edit budget is reached before another edit can be applied,
sampling returns the current state; otherwise it returns the graph at the
terminal time.

\subsection{From terminal states to canonical reactant sets}
\label{app:alg-decoding}

Decoding omits internal atom indices. Full reconstruction writes explicit
fields and bond stereochemistry, then uses RDKit sanitization and stereochemistry
assignment to obtain canonical isomeric SMILES \citep{weininger1988smiles}. For
the primary topology-matching score, decoding retains elements and core bond
types as non-isomeric SMILES. Each view decodes independently.

Sorted component strings form a canonical key, retaining repeated components.
Evaluation follows Appendix~\ref{app:scoring}.

\FloatBarrier
\section{Limitations and Future Work}
\label{app:limitations}

\model{} studies single-step reactant construction from paired reaction
records, with evaluation on patent-derived benchmarks. Extending this setting
to more diverse reaction collections and chemical domains is a natural
direction for examining how the learned construction strategies transfer
across molecular families. Broader evaluation could also complement recovery
of recorded reactants with assessments of alternative precursors under
different synthesis objectives. Such studies would connect the standardized
comparisons used here with a wider range of retrosynthetic choices, including
transformations and applications beyond those represented in the current
benchmarks.

A second direction is to connect dynamic graph construction with
condition-aware generation and multistep synthesis planning. Reaction
conditions, precursor availability, and route-level objectives could provide
additional context for choosing both local edits and complete reactant sets.
Incorporating these factors would allow the single-step generator to be
studied as part of an overall synthesis strategy, where the value of a
prediction depends on the route it enables. Computational assessment and
experimental feedback could help evaluate these candidates in practical
synthesis settings.

\end{document}